\def\eqref#1{equation~\ref{#1}}
\def\1{\bm{1}}
\DeclareMathAlphabet{\mathsfit}{\encodingdefault}{\sfdefault}{m}{sl}
\SetMathAlphabet{\mathsfit}{bold}{\encodingdefault}{\sfdefault}{bx}{n}
\DeclareMathOperator*{\argmin}{arg\,min}
\Crefname{proposition}{Prop.}{Props.}
\Crefname{section}{Sec.}{Secs.}
\Crefname{table}{Tab.}{Tabs.}
\Crefname{appendix}{App.}{Apps.}
\theoremstyle{plain}
\newtheorem{theorem}{Theorem} %[section] % NC removes the section marker here to avoid the preposterous "Theorem 4.1" in a paper with a single theorem
\newtheorem{proposition}[theorem]{Proposition}
\theoremstyle{definition}
\theoremstyle{remark}
\newtheorem*{rep@theorem}{\rep@title}
\newcommand{\newreptheorem}[2]{%
\newenvironment{rep#1}[1]{%
 \def\rep@title{\textbf{\textup{#2 \ref{##1}}}}%
 \begin{rep@theorem}}%
 {\end{rep@theorem}}}
\newcommand{\tactis}{\textsc{tact}{\relsize{-0.5}\textls[120]{i}}\textsc{s}}
\newcommand{\tactisii}{\textsc{tact}{\relsize{-0.5}\textls[120]{i}}\textsc{s-2}}
\newcommand{\besthp}[1]{\textsuperscript{\textbf{#1}}}
\newcommand{\paragraphtight}[1]{\par\textbf{#1}~~~}
\newif\ifcomments
\newcommand{\comments}[1]{#1}
\newcommand{\comments}[1]{}
\definecolor{ao}{rgb}{0.0, 0.5, 0.0}
\title{\tactisii:\! Better, Faster, Simpler Attentional\\ Copulas for Multivariate Time Series}
\author{Arjun Ashok\textsuperscript{1 2 3}, \thanks{Equal Contribution, $^\dagger$Equal Contribution} \;Étienne Marcotte\textsuperscript{1}, $^{*}$Valentina Zantedeschi\textsuperscript{1}, \\ \textbf{$^\dagger$Nicolas Chapados\textsuperscript{1 2}, $^\dagger$Alexandre Drouin\textsuperscript{1 2}} \\
\textsuperscript{1}ServiceNow Research \textsuperscript{2}Mila-Quebec AI Institute \textsuperscript{3}Universit\'e de Montr\'eal \\
Montréal, Canada\\
\texttt{firstname.lastname@servicenow.com}
% \texttt{\footnotesize{\{arjun.ashok,etienne.marcotte,vzantedeschi,nicolas.chapados,alexandre.drouin\}@servicenow.com}}
}
\newcounter{desiderata}
\renewcommand{\thedesiderata}{\roman{desiderata}}
\newcommand{\desiderata}[1]{(\refstepcounter{desiderata}\thedesiderata\label{#1})}
\begin{document}

% For the appendix ToC
\doparttoc % Tell to minitoc to generate a toc for the parts
\faketableofcontents % Run a fake tableofcontents command for the partocs
% \part{} % Start the document part

\maketitle

% \vspace*{-2em}
\begin{abstract}
We introduce a new model for multivariate probabilistic time series prediction, designed to flexibly address a range of tasks including forecasting, interpolation, and their combinations. 
Building on copula theory, we propose a simplified objective for the recently-introduced \emph{transformer-based attentional copulas} (\tactis{}), wherein the number of distributional parameters now scales linearly with the number of variables instead of factorially. The new objective requires the introduction of a training curriculum, which goes hand-in-hand with necessary changes to the original architecture. We show that the resulting model has significantly better training dynamics and achieves state-of-the-art performance across diverse real-world forecasting tasks, while maintaining the flexibility of prior work, such as seamless handling of unaligned and unevenly-sampled time series. Code is made available at \href{https://github.com/ServiceNow/TACTiS}{https://github.com/ServiceNow/TACTiS}.
% \footnote{The code is available at \href{https://github.com/ServiceNow/TACTiS}{https://github.com/ServiceNow/TACTiS}.}
\end{abstract}

% We introduce a new model for multivariate probabilistic time series prediction, designed to flexibly address a range of tasks including forecasting, interpolation, and their combinations. 
% % We consider the problem of nonparametric density estimation for high-dimensional multivariate  distributions. 
% Building on copula theory, we propose a simplified objective for learning valid {attentional copulas}, where the number of distributional parameters scales linearly with the number of variables instead of exponentially as in prior work. 
% The new objective requires the introduction of a training curriculum and substantial changes to the original transformer-based architecture.
% % We then apply our theoretical findings to develop an efficient two-stage curriculum to train 
% % maybe "recently proposed" here instead of in the previous sentence
% % transformer-based attentional copulas in practice.
% We show that the resulting model has significantly better training dynamics, and offers state-of-the-art performance across diverse real-world forecasting tasks.

% \vspace*{-2em}
\section{Introduction} \label{sec:intro}

% \enlargethispage{1\baselineskip}

Optimal decision-making involves reasoning about the evolution of quantities of interest over time, as well as the likelihood of various scenarios~\citep{peterson2017decision}.
In its most general form, this problem amounts to estimating the joint distribution of a set of variables over multiple time steps, i.e., \emph{multivariate probabilistic time series forecasting}~\citep{gneiting2014probabilistic}.
Although individual aspects of this problem have been extensively studied by the statistical and machine learning communities~\citep{hyndman2008forecasting, box2015time, hyndman_forecasting}, they have often been examined in isolation.
Recently, an emerging stream of research has started to seek general-purpose models that can handle several stylized facts of real-world time series problems, namely \desiderata{des:largescale} a large number of time series, \desiderata{des:arbitrary}~arbitrarily complex data distributions, \desiderata{des:irregular}~heterogeneous or irregular sampling frequencies~\citep{shukla2021survey}, \desiderata{des:missing}~missing data~\citep{fang2020time}, and \desiderata{des:covariates}~the availability of deterministic covariates for conditioning (e.g., holiday indicators), while being flexible enough to handle a variety of tasks, such as forecasting and interpolation~\citep{tactis}.

Classical forecasting methods \citep{hyndman2008forecasting, box2015time, hyndman_forecasting} often make strong assumptions about the nature of the data distribution (e.g., parametric forms) and are thus limited in their handling of these desiderata. 
The advent of deep learning-based methods enabled significant progress on this front and led to models that excel at a wide range of time series prediction tasks~\citep{torres_survey, lim_survey, fang2020time}.
Yet, most of these methods lack the flexibility required to meet the aforementioned requirements.
Recently, some general-purpose models have been introduced~\citep{tashiro2021csdi, tactis, pmlr-v202-bilos23a, alcaraz2023diffusionbased}, but most of them only address a subset of the desiderata.
% Recently, significant progress has been made on this front, with the proposal of general-purpose models~\citep{ tashiro2021csdi, tactis, pmlr-v202-bilos23a, alcaraz2023diffusionbased} that address subsets of the desiderata.
% Recently, significant progress has been made on this front, with the proposal of general-purpose models~\citep{ tashiro2021csdi, tactis, pmlr-v202-bilos23a, alcaraz2023diffusionbased}, which are effective at modeling multivariate dependencies at scale, while being considerably more flexible (e.g., support for both interpolation and forecasting).
One notable exception is \emph{Transformer-Attentional Copulas for Time Series} (\tactis{}; \citeauthor{tactis}, \citeyear{tactis}), which addresses all of them, while achieving state-of-the-art predictive performance.
\tactis{} relies on a modular \emph{copula-based} factorization of the predictive joint distribution~\citep{sklar1959fonctions}, where multivariate dependencies are modeled using \emph{attentional copulas}.
These consist of neural networks trained by solving a specialized objective that guarantees their convergence to mathematically valid copulas. 
While the approach of \citet{tactis} is theoretically sound, it requires solving an optimization problem with a number of distributional parameters that grows factorially with the number of variables, leading to poor training dynamics and suboptimal predictions.

In this paper, we build on copula theory to propose a simplified training procedure for \emph{attentional copulas}, solving a problem where the number of distributional parameters scales linearly, instead of factorially, with the number of variables.
% unleashes the full potential of the approach of \citet{tactis}.
Our work results in a general-purpose approach to multivariate time series prediction that also meets all of the above desiderata, while having considerably better training dynamics, namely converging faster to better solutions (see \cref{fig:summary-improvements}).

\begin{wrapfigure}[16]{r}{0.4\textwidth}
    \centering
    \includegraphics[width=\linewidth]{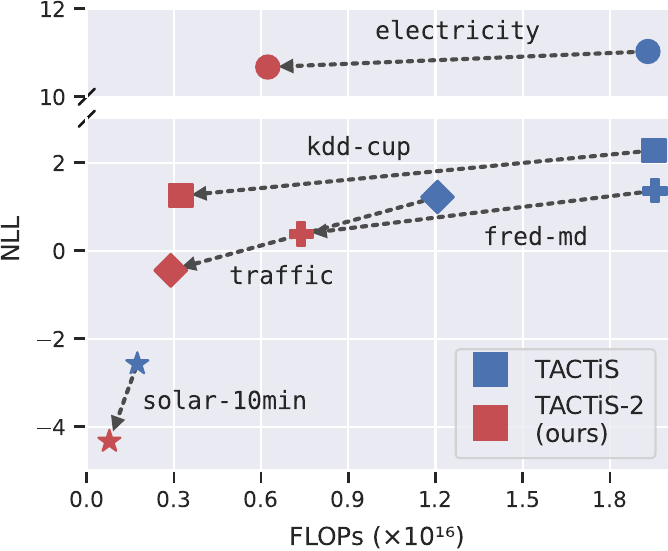}
    \caption{\tactisii{} outperforms \tactis{} in (i) density estimation (lower validation negative log-likelihoods, NLL) and (ii) training compute (fewer floating point operations, FLOPs) in real-world forecasting tasks (see \cref{sec:results}).}
    \label{fig:summary-improvements}
\end{wrapfigure}%

\paragraphtight{Contributions:}

\begin{itemize}[leftmargin=2ex]
    \item We show that, while nonparametric copulas require specialized learning procedures (\cref{thm:degenerate-solutions}), the permutation-based approach used in \tactis{}~\citep{tactis} is unnecessarily complex, and that valid copulas can be learned by solving a two-stage problem whose number of parameters scales linearly with the number of variables (\cref{sec:curriculum-attn-copulas});
    \item We build on these theoretical findings to propose \tactisii{}, an improved version of \tactis{} with a revised architecture that is trained using a two-stage curriculum guaranteed to result in valid copulas (\cref{sec:tactis2});
    \item We empirically show that our simplified training procedure leads to better training dynamics (e.g., faster convergence to better solutions) as well as state-of-the-art performance on a number of real-world forecasting tasks, while preserving the high flexibility of the \tactis{} model (\cref{sec:results});
\end{itemize}

\section{Problem Setting}
% \subsection{Problem Setting}
\label{sec:problem-setting}

This work addresses the general problem of estimating the joint distribution of unobserved values at arbitrary time points in multivariate time series. This setting subsumes classical ones, such as forecasting, interpolation, and backcasting. %, as well as any combination thereof.
%Let us now formalize the stochastic process of interest.
Let $\Xb$ be a multivariate time series comprising of $n$, possibly related, univariate time series, denoted as $\Xb \eqdef \{ \Xb_1, \ldots, \Xb_n \}$.
Each $\Xb_i \eqdef [X_{i1}, \ldots, X_{i, \ell_i}]$ is a random vector representing $\ell_i$ observations of some real-valued process in time.
We assume that, for any realization $\xb_i \eqdef [x_{i1}, \ldots, x_{i,\ell_i}]$ of $\Xb_i$, each $x_{ij}$ is paired with (i) a timestamp, $t_{ij} \in \reals$, with $t_{ij} < t_{i,j+1}$, marking its measurement time and (ii) a vector of non-stochastic covariates $\cb_{ij} \in \reals^{p}$ that represents arbitrary additional information available at each time step.

\paragraphtight{Tasks:} We define our learning tasks with the help of a mask $m_{ij} \in \{0, 1\}$, which determines if any $X_{ij}$ should be considered as observed ($m_{ij} = 1$) or to be inferred ($m_{ij} = 0$). For example, a task that consists in forecasting the last $k$ time steps in $\Xb$ would be defined as $m_{ij} = 0$ for all $i$ and $j$, s.t. $\ell_i - k < j \leq \ell_i$ and $m_{ij} = 1$ otherwise. 
Similarly, a task that consists in interpolating the values of time steps $k$ to $p$ in $\Xb$ could be defined by setting all $m_{ij} = 0$ for all $i$ and $j$ s.t. $k \leq j \leq p$ and $m_{ij} = 1$ otherwise. Arbitrary, more complex, tasks can be defined using this approach.

\paragraphtight{General Problem:} We consider the general problem of estimating the joint distribution of \emph{missing values} (i.e., $m_{ij} = 0$), given the observed ones ($m_{ij} = 1)$, the covariates, and the timestamps: 
\begin{equation}\label{eq:problem-statement}
    P\Big(
        \Xb^{(m)}
        \mathrel{\Big|}
        \Xb^{(o)}, \mathbf{C}^{(m)}, \mathbf{C}^{(o)}, \mathbf{T}^{(m)}, \mathbf{T}^{(o)}
    \Big),
\end{equation}
where $\Xb^{(m)} = [X_{11}, \ldots, X_{1, l_1}; \ldots; X_{n1}, \ldots, X_{n, l_n} \mid m_{ij} = 0]$ is a random vector containing the $d$ random variables $X_{ij}$ corresponding to all missing values, $\Xb^{(o)}$ is the same, but for observed values, and $\mathbf{C}^{(m)}$, $\mathbf{C}^{(o)}$, $\mathbf{T}^{(m)}$, $\mathbf{T}^{(o)}$ correspond to identical partitionings of the covariates and timestamps, respectively.
% In what follows, we denote by $g_{\phib}\!\!\left(x_1, \ldots, x_{d}\right)$ a data-based estimator of \cref{eq:problem-statement}, with parameters $\phib$ that depend on the conditioning variables (i.e., $\Xb^{(o)}, \mathbf{C}^{(m)}, \mathbf{C}^{(o)}, \mathbf{T}^{(m)}, \mathbf{T}^{(o)}$).
In what follows, we estimate the joint probability density function (PDF) of \cref{eq:problem-statement} using a \emph{copula-based density estimator} $g_{\phib}\!\!\left(x_1, \ldots, x_{d}\right)$, whose parameters $\phib$ are conditioned on $\Xb^{(o)}, \mathbf{C}^{(m)}, \mathbf{C}^{(o)}, \mathbf{T}^{(m)}, \mathbf{T}^{(o)}$.

\subsection{Copula-Based Density Estimators}
\label{sec:bg-copulas}

A copula is a probabilistic object that allows capturing dependencies between $d$ random variables independently from their respective marginal distributions.
More formally the joint cumulative distribution function (CDF) for any random vector of $d$ variables $\Xb = [X_1, \ldots, X_d]$, can be written as:
\begin{equation}
    P(X_1 \leq x_1, \ldots, X_d \leq x_d) = C\big(F_1(x_1), \dots, F_d(x_d)\big),
\end{equation}
where \( F_i(x_i) \eqdef P(X_i \leq x_i) \) is the \emph{univariate} marginal CDF of $X_i$ and $C: [0, 1]^d \rightarrow [0, 1]$, the \emph{copula}, is the CDF of a \emph{multivariate} distribution on the unit cube with \emph{uniform} marginals~\citep{sklar1959fonctions}.
Notably, if all $F_i$ are continuous, then this copula-based decomposition is unique.

The present work integrates into a stream of research seeking specific parametrizations of copula-based density estimators for multivariate time series~\citep{salinas2019high,tactis}, namely,
\begin{equation}
\begin{multlined}
\label{eq:model_expansion}
    g_{\phib}\big(x_1, \ldots, x_{d}\big) \;\eqdef
    c_{\phi_\text{c}}\Big( 
        F_{\phi_1}\!\big(x_1\big), \ldots, F_{\phi_{d}}\!\big(x_{d}\big)
    \!\Big)  \times
    f_{\phi_1}\!\big(x_1\big) \times \cdots \times f_{\phi_{d}}\!\big(x_{d}\big),
\end{multlined}
\end{equation}
where $\phib = \{ \phi_1, \ldots, \phi_d; \phi_c \}$, with $\{\phi_i\}_{i=1}^d$ the parameters of the marginal distributions (with $F_{\phi_i}$ and $f_{\phi_i}$ the estimated CDF and PDF respectively) and $\phi_c$ the parameters of the copula density $c_{\phi_\text{c}}$.
The choice of distributions for $F_{\phi_i}$ and $c_{\phi_c}$ is typically left to the practitioner.
For example, one could take $F_{\phi_i}$ to be the CDF of a Gaussian distribution and $c_{\phi_c}$ to be a \emph{Gaussian copula}~\citep{nelsen2007introduction}.
The parameters can then be estimated by minimizing the negative log-likelihood:
\begin{equation}\label{eq:general-opt-problem}
    \argmin_{\phib} \; - \!\!\expect_{\xb \sim \Xb} \log g_{\phib}\!\!\left(x_1, \ldots, x_{d}\right).
\end{equation}

\section{Learning Nonparametric Copulas}\label{sec:copula-density-estimators}

% \paragraph{Key challenge:} 
Oftentimes, to avoid making parametric assumptions, one may take the $c_{\phi_c}$ and $F_{\phi_i}$ components of the estimator to be highly flexible neural networks~\citep{wiese2019copula, Janke2021, tactis}.
While it is easy to constrain $c_{\phi_c}$ to have a valid domain and codomain, this does not imply that its marginal distributions will be uniform.
Thus, as observed by \citet{Janke2021} and \citet{tactis}, a key challenge is ensuring that the distribution $c_{\phi_c}$ satisfies the mathematical definition of a copula (see \cref{sec:bg-copulas}). 
We strengthen this observation by proving a new theoretical result, showing that solving Problem~(\ref{eq:general-opt-problem}) without any additional constraints can lead to infinitely many solutions where $c_{\phi_c}$ is not a valid copula:
\begin{proposition}\label{thm:degenerate-solutions}
(Invalid Solutions) Assuming that all random variables $X_1, \ldots, X_d$ have continuous marginal distributions and assuming infinite expressivity for $\{F_{\phi_i}\}_{i = 1}^d$ and $c_{\phi_c}$, Problem~(\ref{eq:general-opt-problem}) has infinitely many invalid solutions wherein $c_{\phi_c}$ is not the density function of a valid copula.
\end{proposition}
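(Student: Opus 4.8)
The plan is to exhibit an explicit parametric family of solutions, indexed by an arbitrary "nuisance'' transformation, all of which attain the same (optimal) value of the objective in Problem~(\ref{eq:general-opt-problem}), yet for which $c_{\phi_c}$ fails to have uniform marginals and hence is not a copula. The key observation is that the factorization in \cref{eq:model_expansion} is a product of a copula density evaluated at $\big(F_{\phi_1}(x_1),\dots,F_{\phi_d}(x_d)\big)$ and the marginal densities $f_{\phi_i}(x_i)$, and that the \emph{overall} density $g_{\phib}$ is invariant under a large group of reparametrizations that shuffle "mass'' between the $F_{\phi_i}$ factors and the $c_{\phi_c}$ factor. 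Concretely, the true optimum is attained when $g_{\phib}$ equals the true joint density $p$; by Sklar's theorem this is achieved, in particular, by taking $F_{\phi_i}$ to be the true marginal CDFs and $c_{\phi_c}$ the true copula density. I will then perturb this solution.

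First I would fix any strictly increasing, continuously differentiable bijection $\psi_i\colon[0,1]\to[0,1]$ for each $i$ that is \emph{not} the identity (so $\psi_i'$ is not identically $1$). Define modified "marginal CDFs'' $\widetilde F_{\phi_i} \eqdef \psi_i \circ F_i$, which are still valid CDFs of continuous distributions on $\reals$ (monotone, ranging over $(0,1)$), with densities $\widetilde f_{\phi_i}(x) = \psi_i'(F_i(x))\, f_i(x)$. Next, define a modified copula-region density $\widetilde c_{\phi_c}$ on $[0,1]^d$ by the change-of-variables formula that exactly compensates: $\widetilde c_{\phi_c}(u_1,\dots,u_d) \eqdef c\big(\psi_1^{-1}(u_1),\dots,\psi_d^{-1}(u_d)\big)\cdot \prod_{i=1}^d (\psi_i^{-1})'(u_i)$. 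By construction, substituting $\widetilde F_{\phi_i}$ and $\widetilde c_{\phi_c}$ into \cref{eq:model_expansion} yields exactly the same function $g_{\phib} = p$: the $\psi_i'$ Jacobian factors introduced in the marginals cancel against the $(\psi_i^{-1})'$ factors in $\widetilde c_{\phi_c}$. Hence this perturbed parameter tuple achieves the same (optimal) objective value in Problem~(\ref{eq:general-opt-problem}). Under the infinite-expressivity assumption, both $\{\widetilde F_{\phi_i}\}$ and $\widetilde c_{\phi_c}$ are representable, so this is a genuine solution of the optimization problem.

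It then remains to check that $\widetilde c_{\phi_c}$ is \emph{not} a valid copula density, i.e., that its marginals are not uniform. The $i$-th marginal of $\widetilde c_{\phi_c}$ is obtained by integrating out all other coordinates; since the original $c$ has uniform marginals, one computes that the $i$-th marginal density of $\widetilde c_{\phi_c}$ equals $(\psi_i^{-1})'(u_i)$, which is the uniform density on $[0,1]$ only if $\psi_i$ is the identity — contradicting our choice. Since there are uncountably many admissible choices of the tuple $(\psi_1,\dots,\psi_d)$ (e.g.\ $\psi_1(u) = u^\alpha$ for $\alpha > 0$, $\alpha \ne 1$, with the remaining $\psi_i$ the identity), we obtain infinitely many distinct invalid solutions, proving the claim.

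The main obstacle — really the only subtle point — is the bookkeeping in the marginalization step: one must verify carefully that integrating $\widetilde c_{\phi_c}$ over the $d-1$ other coordinates reproduces exactly $(\psi_i^{-1})'(u_i)$, which uses both the product structure of the Jacobian factor $\prod_i (\psi_i^{-1})'(u_i)$ and the fact that $c$ integrates to $1$ in each coordinate (uniform marginals). A secondary point worth stating explicitly is that the argument is vacuous unless the true optimum of Problem~(\ref{eq:general-opt-problem}) is actually attained by a valid configuration in the first place (so that "invalid'' is a meaningful contrast); this is guaranteed by Sklar's theorem under the continuity hypothesis, and the infinite-expressivity assumption ensures all the perturbed objects lie in the model class.
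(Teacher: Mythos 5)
Your proposal is correct and is essentially the paper's own argument: writing $\widetilde F_{\phi_i}=\psi_i\circ F_i$ and compensating inside the copula factor is exactly the paper's construction $C_{\phi_c}(u_1,\dots,u_d)=C^\star\big(F^\star_1(F_{\phi_1}^{-1}(u_1)),\dots,F^\star_d(F_{\phi_d}^{-1}(u_d))\big)$ with $F_{\phi_i}=\psi_i\circ F^\star_i$, just phrased at the density level rather than the CDF level. Your marginalization check (integrating out the other coordinates to get $(\psi_i^{-1})'(u_i)$) plays the same role as the paper's evaluation of the CDF at $(u_1,1,\dots,1)$, so nothing is missing.
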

\begin{proof}
The proof, detailed in \cref{app:sec-degenerate-proof}, shows that one can create infinitely many instantiations of $F_{\phi_i}$ and $c_{\phi_c}$ where $p(x_1, \ldots, x_{d}) = g_{\phib}\!\!\left(x_1, \ldots, x_{d}\right)$, but the true marginals and the copula are entangled.
\end{proof}

Hence, using neural networks to learn copula-based density estimators is non-trivial and requires more than a simple modular parametrization of the model.

\subsection{Permutation-Invariant Copulas}\label{sec:attn-copulas}

% \alex{We could simplify this section by removing the normalizing flow stuff. In fact, it's not required to use a flow to get a valid copula using the permutation trick. We could just say that the result holds for arbitrary marginals.}
Recently, \citet{tactis} showed that valid nonparametric copulas can be learned using a permutation-based objective.
% Their approach is as follows.
% First, they model each marginal $\{F_{\phi_i}\}_{i=1}^n$ with a \emph{Deep Sigmoidal Flow}~\citep{huang18-naf} to ensure that they are well-defined, i.e., continuous, monotonically increasing, and mapping to the unit interval $[0, 1]$.
% Second, they consider 
Their approach considers a factorization of $c_{\phi_c}$ based on the chain rule of probability according to an arbitrary permutation of the variables $\pib = [\pi_1, \ldots, \pi_{d}] \in \Pi$, where $\Pi$ is the set of all $d!$ permutations of $\{1, \ldots, d\}$.
The resulting copula density, $c_{\phi_{c}^{\pib}}$, can be written as:
\begin{equation}
    \label{eq:copula-density}
    c_{\phib_{c}^{\pib}}\!\left(u_1, \ldots, u_{d} \right) \eqdef c_{\phi_{c,1}^{\pib}}\!\!\left(u_{\pi_1}\right) \times c_{\phi_{c,2}^{\pib}}\!\!\left(u_{\pi_2} \mid u_{\pi_1}\right) \times \cdots \times
    c_{\phi^{\pib}_{c,d}}\!\!\left(u_{\pi_{d}} \mid u_{\pi_1}, \ldots, u_{\pi_{d - 1}} \right),
\end{equation}
where $u_{\pi_k} = F_{\phi_{\pi_k}}\!\big(x_{\pi_k}\big)$, with $F_{\phi_{\pi_k}}$ arbitrary marginal CDFs, and the $c_{\phi_{c,i}^{\pib}}$ are arbitrary distributions (e.g., histograms) on the unit interval with parameters $\phi_{c,i}^{\pib}$.
There is, however, one important exception: the density of the first variable in the permutation $c_{\phi_{c,1}^{\pib}}$ is always taken to be that of a \emph{uniform distribution} $\uniform{0}{1}$ and thus $c_{\phi_{c,1}^{\pib}}\!\!\left(u_{\pi_1}\right) = 1$.
% This choice is motivated by the fact that any copula must have uniform marginals and that the first term in the chain rule of probability is a marginal density.
This choice, combined with solving the following problem, guarantees that, at the minimum, all of the $c_{\phi_{c}^{\pib}}$, irrespective of $\pib$, are equivalent and correspond to valid copula densities:
\begin{equation}\label{eq:copula-objective-permutation}
    \argmin_{\phi_1, \ldots, \phi_d, \phi_c^{\pib}} \;\; - \expect_{\xb \sim \Xb} \expect_{\pib \sim \Pi} \log c_{\phi_{c}^{\pib}}\Big( 
        F_{\phi_{1}}\!\big(x_{1}\big), \ldots, F_{\phi_{d}}\!\big(x_{d}\big)
    \!\Big) \times
    f_{\phi_1}\!\big(x_1\big) \times \cdots \times f_{\phi_{d}}\!\big(x_{d}\big),
\end{equation}
where $\phi_c^{\pib} \eqdef \{ \phi_{c,1}^{\pib}, \ldots \phi_{c,d}^{\pib} \}_{\pib \in \Pi}$ are the parameters for each of the $d!$ factorizations of the copula density (see \citet{tactis}, Theorem~1).

\paragraphtight{Limitations:} Obtaining a valid copula using this approach requires solving an optimization problem with $O(d!)$ parameters. This is extremely prohibitive for large $d$, which are common in multivariate time series prediction (e.g., $d = 8880$ for the common \texttt{electricity} benchmark; \citet{ror}).
% When approaching this problem in a naive way, the time required for a single iteration of gradient descent is $O(d!)$.
The approach proposed in \citet{tactis}, consists of parametrizing a single neural network to output $\phib^\Pi$ and using a Monte Carlo approximation of the expectation on $\Pi$.
However, such an approach has several caveats, e.g., (i)~the neural network must have sufficient capacity to produce $O(d!)$ distinct values and (ii)~due to the sheer size of $\Pi$, only a minuscule fraction of all permutations can be observed in one training batch, resulting in slow convergence rates.
This is supported by empirical observations in \cref{sec:results}.

\subsection{Two-Stage Copulas} \label{sec:curriculum-attn-copulas}

In this work, we take a different approach to learning nonparametric copulas, one that does not rely on permutations, alleviating the aforementioned limitations.
% \vale{good place to ref previous works} \arjun{I'll refer to them once I'm done with the related work}
% Our approach builds on the following two-stage optimization problem, whose properties have previously been studied in the copula literature
Our approach builds on the following two-stage optimization problem, whose properties have previously been studied
% ~(e.g.~\citeauthor{joe1996estimation}, \citeyear{joe1996estimation}) 
in the context of parametric \citep{joe1996estimation, Andersentwostage, JoeAsymptotic} and semi-parametric estimators \citep{Andersentwostage}, and where the number of parameters scales with $O(d)$ instead of $O(d!)$:
\label{eq:overall}
\begin{align}
    \argmin_{\phi_c} \quad &- \expect_{\xb \sim \Xb}\; \!\log c_{\phi_c}\left(F_{\phi_{1}^\star}(x_{1}), \ldots, F_{\phi_{d}^\star}(x_{d})\right)\label{eq:second} \\[-2ex]
    s.t. \qquad &\;(\phi_1^\star, \ldots, \phi_{d}^\star) \in  \argmin_{\phi_1, \ldots, \phi_{d}} - \expect_{\xb \sim \Xb} \log \prod_{i = 1}^{d} f_{\phi_{i}}(x_{i}).\label{eq:first}
\end{align}
Hence, the optimization proceeds in two stages:
\begin{itemize}[leftmargin=19mm]
    \item[\faAngleDoubleRight~\textbf{Stage 1:}] Learn the marginal parameters, irrespective of multivariate dependencies (\cref{eq:first});
    \item[\faAngleDoubleRight~\textbf{Stage 2:}] Learn the copula parameters, given the optimal marginals (\cref{eq:second}).
\end{itemize}

Beyond obtaining an optimization problem that is considerably simpler, we show that any nonparametric copula $c_{\phi_c}$ learned using this approach is valid, i.e., that it satisfies the mathematical definition of a copula:
\begin{proposition}\label{thm:copula}
(Validity) Assuming that all random variables $X_1, \ldots, X_d$ have continuous marginal distributions and assuming infinite expressivity for $\{F_{\phi_i}\}_{i = 1}^d$ and $c_{\phi_c}$, solving Problem~(\ref{eq:second}) yields a solution to Problem~(\ref{eq:general-opt-problem}) where $c_{\phi_c}$ is a valid copula.
\end{proposition}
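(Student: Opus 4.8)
The plan is to exploit the modularity of Sklar's decomposition: with infinite expressivity, Stage~1 recovers the true marginals exactly, after which Stage~2 reduces to fitting the true copula density, which is valid by construction. Concretely, write $f_i$ and $F_i$ for the true marginal density and CDF of $X_i$, let $p$ be the true joint density of $\Xb$, and recall that $\xb\sim\Xb$ implies $x_i\sim X_i$ marginally. For the Stage~1 objective \cref{eq:first},
\begin{equation*}
    -\expect_{\xb\sim\Xb}\log\prod_{i=1}^d f_{\phi_i}(x_i)
    = \sum_{i=1}^d\Big(H(X_i) + \KL\big(f_i\,\big\|\,f_{\phi_i}\big)\Big),
    \qquad H(X_i)\eqdef -\expect\!\big[\log f_i(X_i)\big],
\end{equation*}
so by Gibbs' inequality each summand is minimized, with value $H(X_i)$, exactly when $f_{\phi_i}=f_i$ almost everywhere; infinite expressivity guarantees this minimum is attained. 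Since a density determines its CDF, every Stage~1 optimum satisfies $F_{\phi_i^\star}=F_i$ for all $i$ (the network weights need not be unique, but the induced marginal CDFs are, which is all Stage~2 depends on).

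Next I would invoke the probability integral transform: because each $F_i$ is continuous, $U_i\eqdef F_i(X_i)\sim\uniform{0}{1}$. Let $c^\star$ be the joint density of $\mathbf{U}\eqdef(U_1,\dots,U_d)$, which exists by the change-of-variables formula and, by Sklar's theorem, is precisely the copula density in the decomposition $p(x_1,\dots,x_d)=c^\star\big(F_1(x_1),\dots,F_d(x_d)\big)\prod_{i=1}^d f_i(x_i)$. The key point is that $c^\star$ is a \emph{valid} copula density: its $i$-th marginal is the density of $U_i$, which is uniform on $[0,1]$.

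I would then handle Stage~2 \cref{eq:second}. Substituting $F_{\phi_i^\star}=F_i$ and pushing the expectation forward to $\mathbf{U}$,
\begin{equation*}
    -\expect_{\xb\sim\Xb}\log c_{\phi_c}\big(F_{\phi_1^\star}(x_1),\dots,F_{\phi_d^\star}(x_d)\big)
    = -\expect_{\mathbf{U}}\log c_{\phi_c}(\mathbf{U})
    = H(\mathbf{U}) + \KL\big(c^\star\,\big\|\,c_{\phi_c}\big),
\end{equation*}
which is minimized exactly when $c_{\phi_c}=c^\star$ almost everywhere, again attainable under infinite expressivity. Hence the learned $c_{\phi_c}$ is a valid copula density. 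Finally, plugging $c_{\phi_c}=c^\star$ and $f_{\phi_i}=f_i$ into the estimator $g_{\phib}$ of \cref{eq:model_expansion} and using the Sklar decomposition above yields $g_{\phib}\equiv p$, the true joint density, which attains the global minimum of Problem~(\ref{eq:general-opt-problem}); thus the two-stage solution is simultaneously an optimal solution of Problem~(\ref{eq:general-opt-problem}) and has a valid copula, completing the argument.

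The main obstacle is conceptual rather than computational: the two stages are optimized independently, so one must argue that freezing the marginals to their true values does not prevent the \emph{joint} objective \eqref{eq:general-opt-problem} from reaching its global optimum — this is exactly what Sklar's theorem delivers, since $p$ factorizes into the true copula density times the true marginal densities, and each factor is recovered by a separate stage. Two further points need care: ``infinite expressivity'' must be interpreted as the model class containing (or approximating arbitrarily well) the relevant densities, with a routine limiting argument in the approximation case; and continuity of the marginals is used twice — once so that $U_i$ is \emph{exactly} uniform via the probability integral transform, and once so that the copula in the Sklar decomposition is unique.
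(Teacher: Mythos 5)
Your proposal is correct and follows essentially the same route as the paper's proof: Stage~1 decomposes into $d$ independent marginal NLL minimizations whose unique optimum is the true marginals, then the probability integral transform turns Stage~2 into a joint NLL minimization over the uniform-marginal vector $\mathbf{U}$, whose optimum is the true (hence valid) Sklar copula. The only cosmetic difference is that you justify uniqueness of the minimizers via the explicit cross-entropy/KL decomposition and Gibbs' inequality, whereas the paper invokes the equivalent fact that the NLL is a strictly proper scoring rule.
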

\begin{proof}
    The proof builds on results from \citet{sklar1959fonctions} and is provided in \cref{app:sec-valid-copula}.
% The proof relies on \citet{sklar1959fonctions}'s \cref{thm:sklar} and the fact that the Negative Log-Likelihood is a \emph{strictly proper scoring rule} (see \cref{app:proof} for details).
\end{proof}

% As a direct consequence, we have that $c_{\phi_c}$ is the joint density of a valid copula.
% We have thus shown the sufficiency of our curriculum in learning valid copulas. But, is it necessary?
% It is not, since one could be lucky and learn a $c_{\phi_c}$ that is a valid copula. However, the following result shows that omitting our curriculum in training such a nonparametric copula can result in infinitely many solutions where $c_{\phi_c}$ is not valid.

% Further, we show that, if one attempts to optimize Problem (\ref{eq:general-opt-problem}) without following our proposed two-stage procedure, the optimization can result in infinitely many solutions where the learned $c_{\phi_c}$ is not a valid copula:

% In what follows, we build on this approach to extend the \tactis{} time series prediction model and obtain a \emph{simpler} model with \emph{better} training dynamics and \emph{faster} convergence at no cost in expressivity.

% \vspace{-10mm}
\section{The \textsc{tact}{\fontsize{10pt}{11pt}\selectfont\textls[120]{i}}\textsc{s-2} Model}\label{sec:tactis2}
% \vspace{-1mm}
% \vspace*{-1cm}
\begin{figure}
    \centering
    \vspace{-10mm}
    \includegraphics[width=0.9\linewidth]{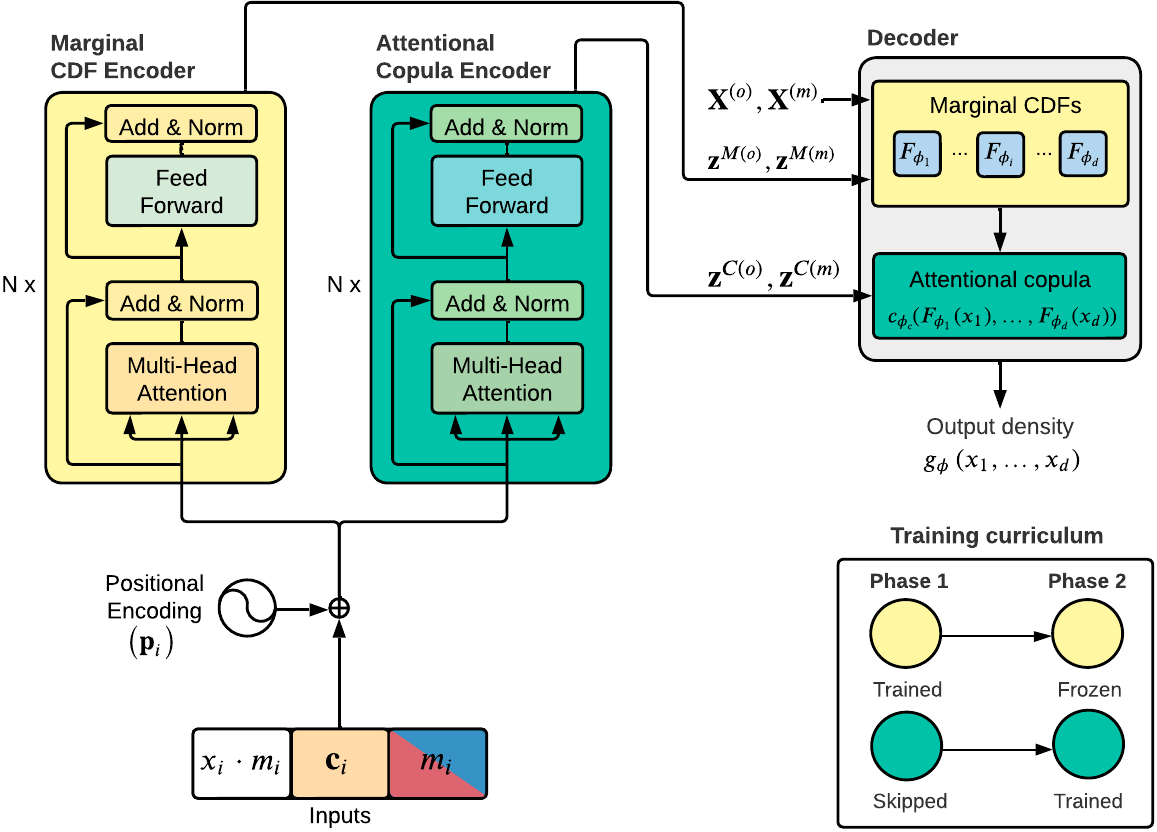}
    \caption{The \tactisii{} architecture with the dual encoder and the decoder. The training curriculum {(bottom right)} shows the proposed two-stage approach.}
    \vspace{-1mm}
    % I'll edit this soon (Arjun). Also maybe add a box enclosing the encoders (like in the decoder)\vale{fix inputs for encoder and for decoder. $X^m$ is also passed to the decoder (but it's missing). make it clear u = F(). not clear why it is useful to have two arrows going out of the encoders} \alex{We have two arrows to emphasize the fact that both the observed and missing tokens are passed through the encoder, which is not typical.}
    % \arjun{$x_m$ is not passed to the decoder. Only $z_m$ is passed. $x_m$ is the values of the missing tokens (which we are supposed to predict)}\alex{No, since we use teacher forcing.} \arjun{I am going to modify this figure on Monday FYI}\alex{Add a) and b) labels to the model and curriculum for reference.} \arjun{Sure. I'm keeping figure modifications at the end}
    % \nicolas{The text in the figure should be made bigger. (It shouldn't be much smaller than the caption text.)}\arjun{I tried to edit this figure but there's too many changes and differences.}} 
    \label{fig:tactis-ii-figure}
\end{figure}

% \arjun{\textbf{Problem setting} section will state that the goal of the task is to learn the joint distribution of missing values in a time series}

% \arjun{\textbf{TODO}: To state somewhere below again: that the goal is to learn the joint distribution conditioned on x, C and m}

% \arjun{TODO: Cite the TACTIS paper in some parts. Later}

% We apply our findings to develop a new version of \tactis{} for multivariate probabilistic time series prediction. We first give a brief of the \tactis{} model.

% \subsection{The TACTIS architecture}

% \arjun{I am rewriting this section such that it looks like TACTIS-2 is the proposed architecture similar to TACTIS}

% \arjun{Introduce TACTIS here in a way that states the flexibility. A draft below.}

Building on \cref{sec:curriculum-attn-copulas}, we propose \tactisii{}, a model for multivariate probabilistic time series prediction that inherits the flexibility of the \citet{tactis} model while benefiting from a considerably \emph{simpler} training procedure, with \emph{faster} convergence to \emph{better} solutions.
Figure~\ref{fig:tactis-ii-figure} provides an overview of its architecture.
In essence, the main difference with \tactis{} lies in the choice of the copula-based density estimator (\cref{sec:copula-density-estimators}): rather than being \emph{permutation-invariant} (\cref{sec:attn-copulas}), it is of the proposed \emph{two-stage type} (\cref{sec:curriculum-attn-copulas}).
% the \emph{attentional copula} parametrized by the decoder: rather than being \emph{permutation-based} (see \cref{sec:attn-copulas}), it is of the \emph{two-stage type} (see \cref{sec:curriculum-attn-copulas}).
This difference mandates changes to the architecture and the loss of the model, as well as the introduction of a training curriculum, which we outline below.
% \footnote{For implementation details, the reader is referred to \href{https://github.com/ServiceNow/tactis/tree/tactis-2}{https://github.com/ServiceNow/tactis/tree/tactis-2}.}
% \footnote{For implementation details, the reader is referred to \red{[link to code is anonymized for review]}.}
In what follows, we use $\boldsymbol{\theta}$ to denote parameters of neural networks, which are not to be confused with distributional parameters, denoted by $\phib$.

\paragraphtight{Dual Encoder:} Whereas \tactis{} relies on a \emph{single encoder} to produce all parameters of the output density, \tactisii{} relies on \emph{two distinct encoders} ($\text{Enc}_{\theta_M}$ and $\text{Enc}_{\theta_C}$) whose representations are used to parametrize the marginal CDFs ($F_{\phi_i}$) and the copula distribution ($c_{\phi_c}$), respectively.
As in \tactis{}, these are transformer encoders that embed realizations of both observed $\Xb^{(o)}$ and missing $\Xb^{(m)}$ values, with the missing values masked as in \emph{masked language models}~\citep{devlin2018bert}). Let $x_i$ refer to the realization of a generic random variable (missing or observed) and $m_i$, $\cb_i$, and $t_i$ its corresponding mask, covariates, and timestamp. The representations are obtained as:
\begin{equation}\label{eq:encoder}
    \zb^M_i = \text{Enc}_{\theta_M}\!\left(x_i \cdot m_i, \cb_i, m_i, \pb_i\right), \quad\quad
    \zb^C_i = \text{Enc}_{\theta_C}\!\left(x_i \cdot m_i, \cb_i, m_i, \pb_i\right),
\end{equation}
where information about the timestamp $t_i$ is incorporated in the process via an additive positional encoding $\pb_i$, which we take to be sinusoidal features as in \citet{vaswani2017attention}. 

\paragraphtight{Decoder:}
As in \tactis{}, the decoder estimates the conditional density of missing values $\Xb^{(m)}$ (\cref{eq:problem-statement}) with a copula-based estimator structured as in \cref{eq:model_expansion}.
It consists of two modules, which are tasked with producing the distributional parameters for (i) the marginal CDFs $F_{\phi_i}$, and (ii) the copula $c_{\phi_c}$, respectively.
The modules are akin to those used in \tactis{} but differ in that they only make use of their respective encodings: $\zb^M$ and $\zb^C$.
For completeness, we outline them below.

\paragraphtight{(Marginals)} The first module is a Hyper-Network (HN$_{\theta_M}$) producing the parameters $\phi_i$ of \textit{Deep Sigmoidal Flows} (DSF)~\citep{huang18-naf}\footnote{The original Deep Sigmoidal Flow is modified such that it outputs values in $[0, 1]$.} that estimate each $F_{\phi_i}$. The value $u_i$, corresponding to the probability integral transform of $x_i$, is then obtained as
\begin{equation}\label{eq:marginal-decoder}
    \phi_i = \text{HN}_{\theta_M}\big(\zb^M_i\big), \quad\quad
    u_i = \text{DSF}_{\phi_i}(x_i).
\end{equation}

\paragraphtight{(Copula)} The second module parametrizes a construct termed \emph{attentional copula}. This corresponds to a factorization of the copula density, as in \cref{eq:copula-density}, where each conditional is parametrized using a causal attention mechanism~\citep{vaswani2017attention}.
More formally, consider an arbitrary ordering of the missing variables $\Xb^{(m)}$ and let $c_{\phi_{c,i}}\!\!\left(u_{i} \mid u_{_1}, \ldots, u_{i-1} \right)$ be the $i$-th term in the factorization. Parameters $\phi_{c, i}$ are produced by attending to the $\zb^C_j$ (see \cref{eq:encoder}) and $u_j$ (see \cref{eq:marginal-decoder}) for all observed tokens, denoted by $\zb^{C(o)}$ and $\ub^{(o)}$, and for all missing variables that precede in the ordering, denoted by $\zb^{C(m)}_{1:i-1}$ and $\ub^{(m)}_{1:i-1}$:
\begin{align*} \label{eq:decoder-copula}
    \Kb_i &= \text{Key}_{\theta_C}(\zb^{C(o)},\zb^{C(m)}_{1:i-1},\ub^{(o)},\ub^{(m)}_{1:i-1})   & 
    \Vb_i &= \text{Value}_{\theta_C}(\zb^{C(o)},\zb^{C(m)}_{1:i-1},\ub^{(o)},\ub^{(m)}_{1:i-1}) \\
    \qb_i &= \text{Query}_{\theta_C}(\zb^{C(m)}_{i})                                            & 
    \quad \phi_{c,i} &= \text{Attn}_{\theta_C}(\qb_i, \Kb_i, \Vb_i)
\end{align*}
where $\text{Attn}_{\theta_C}$ is an attention mechanism that attends to the keys $\Kb_i$ and values $\Vb_i$ using query $\qb_i$, and applies a non-linear transformation to the output.
As in \citet{tactis}, we take each $c_{\phi_{c,i}}$ to be a histogram distribution with support in $[0, 1]$, but other choices are compatible with this approach.

\paragraphtight{Curriculum Learning:}
A crucial difference between \tactis{} and \tactisii{} lies in their training procedure.
\tactis{} optimizes the cumbersome, permutation-based, Problem~(\ref{eq:copula-objective-permutation}).
In contrast, \tactisii{} is trained by maximum likelihood using a training curriculum, enabled by the use of a \emph{dual encoder}, which we show is equivalent to solving the two-phase Problem~(\ref{eq:second}). See \cref{fig:tactis-ii-figure} for an illustration.

In the first phase, only the parameters $\theta_M$ for the marginal components are trained, while those for the copula, $\theta_C$, are skipped. This boils down to optimizing Problem~(\ref{eq:general-opt-problem}) using a trivial copula where all variables are independent (which we denote by $c_I$):
\begin{equation}
    \argmin_{\theta_M} \;- \expect_{\xb \sim \Xb} \log \left[ c_I\left(F_{\phi_1}(x^{(m)}_1), \ldots, F_{\phi_{n_m}}(x^{(m)}_{d})\right) \cdot \prod_{i = 1}^{d} f_{\phi_{i}}(x^{(m)}_{i}) \right],
\end{equation}
where each $f_{\phi_i}$ is obtained by differentiating $F_{\phi_i}$ w.r.t. $x_i$, an operation that is efficient for DSFs.
Since by definition $c_I(\ldots) \equiv 1$, this problem reduces to Problem~(\ref{eq:first}).

In the second phase, the parameters learned for the marginal components $\theta_M$ are frozen and the parameters for the copula components $\theta_C$ are trained until convergence.
The optimization problem is hence given by:
\begin{align}
    & \argmin_{\theta_C} \;-\expect_{\xb \sim \Xb} \log \left[ c_{\phi_c}(F_{\phi^\star_{1}}(x^{(m)}_{1}), \ldots, F_{\phi^\star_{{d}}}(x^{(m)}_{{d}})) \cdot \prod_{i = 1}^{d} f_{\phi^\star_i}(x_{i}^{(m)}) \right],
\end{align}
which reduces to Problem~(\ref{eq:second}), since $\{f_{\phi^\star_i}(\ldots)\}_{i=1}^d$ are constant.
Hence, by \cref{thm:copula}, we have that, given sufficient capacity, the \emph{attentional copulas} learned by \tactisii{} will be valid.
% , despite not relying on a permutation-based objective. 
We stress the importance of the proposed learning curriculum, since by \cref{thm:degenerate-solutions}, we know that simply maximizing the likelihood w.r.t. $\theta_M$ and $\theta_C$ is highly unlikely to result in valid copulas.

\paragraphtight{Sampling:} Inference proceeds as in \tactis{}: (i)~sampling according to the copula density and (ii)~applying the inverse CDFs to obtain samples from \cref{eq:problem-statement}. We defer to \citet{tactis} for details.

\vspace{-1mm}
\section{Experiments}\label{sec:results}
\vspace{-1mm}

We start by empirically validating the two-stage approach to learning attentional copulas (\cref{subsec:valid-att-copula}).
Then, we show that \tactisii{} achieves state-of-the-art performance in a forecasting benchmark and that it can perform highly accurate interpolation (\cref{subsec:forecasting-results}).
Finally, we show that \tactisii{} outperforms \tactis{} in all aspects, namely accuracy and training dynamics, while preserving its high flexibility.

\begin{wrapfigure}[10]{r}{0.24\textwidth}
    \vspace{-32pt}
    \centering
    \includegraphics[width=0.95\linewidth]{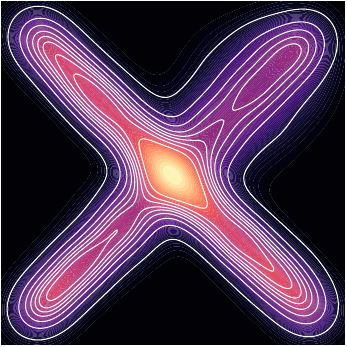}
    \caption{The density of the learned copula (contours) closely matches that of the ground truth (colors).}
    \label{fig:curriculum_bare_copula}
\end{wrapfigure}

\subsection{Empirical Validation of Two-Stage Attentional Copulas} \label{subsec:valid-att-copula}

According to \cref{thm:copula}, \tactisii{} should learn valid copulas.
We empirically validate this claim in a setting where the sample size, training time, and capacity are finite.
As in \citet{tactis}, we rely on an experiment in which data is drawn from a bivariate distribution with a known copula structure.
The results in \cref{fig:curriculum_bare_copula} show that, in this setting, \tactisii{} recovers a valid copula that closely matches the ground truth.
See \cref{app:copula_examples} for experimental details and additional results.

\subsection{Evaluation of Predictive Performance} \label{subsec:forecasting-results} 

We now evaluate the forecasting and interpolation abilities of \tactisii{} in a benchmark of five common real-world datasets from the \textit{Monash Time Series Forecasting Repository}~\citep{godahewa2021monash}: \texttt{electricity}, \texttt{fred-md}, \texttt{kdd-cup}, \texttt{solar-10min}, and \texttt{traffic}.
These were chosen due to their diverse dimensionality ($n \in [107, 826]$), sampling frequencies (monthly, hourly, and $10$ min.), and prediction lengths ($\ell_i \in [12, 72]$). All datasets are detailed in \cref{app:datasets}.

% Traffic CRPS-Sum: These are preliminary numbers directly from run. TODO Again.

\begin{table}
    \centering
    \caption{Mean CRPS-Sum values for the forecasting experiments. Standard errors are calculated using the Newey-West \citeyearpar{NeweyWest1987,NeweyWest1994} estimator. Average ranks report the average ranking of methods across all evaluation windows and datasets. Lower is better and the best results are in bold.}  %
    \setlength{\tabcolsep}{4.5pt}
    \label{table:results-forecasting-CRPS-Sum}
    \footnotesize\renewcommand{\arraystretch}{1.15}%
    \smallskip

  \setlength{\tabcolsep}{0.45ex}
  \hspace*{-0.5ex}%
  \begin{tabular*}{\textwidth}{@{\extracolsep{\fill}}rcccccc@{}}
   \toprule
    \multicolumn{1}{@{}c}{\textbf{Model}} & \multicolumn{1}{c}{\texttt{electricity}} & \multicolumn{1}{c}{\texttt{fred-md}} & \multicolumn{1}{c}{\texttt{kdd-cup}} & \multicolumn{1}{c}{\texttt{solar-10min}} & \multicolumn{1}{c}{\texttt{traffic}}  & \multicolumn{1}{@{}c@{}}{{Avg. Rank}} \\
   \midrule
    Auto-\textsc{arima}   
                & 0.077 $\pm$  0.016    & 0.043 $\pm$  0.005   & 0.625 $\pm$  0.066    & 0.994 $\pm$  0.216    & 0.222 $\pm$  0.005  & 6.2 $\pm$ 0.3 \\
    ETS         & 0.059 $\pm$  0.011    & 0.037 $\pm$ 0.010    & 0.408 $\pm$  0.030    & 0.678 $\pm$  0.097    & 0.353 $\pm$  0.011  & 6.0 $\pm$ 0.2 \\
    TempFlow    & 0.075 $\pm$  0.024    & 0.095 $\pm$  0.004   & 0.250 $\pm$  0.010    & 0.507 $\pm$  0.034    & 0.242 $\pm$  0.020  & 5.4 $\pm$ 0.2 \\
    SPD         & 0.062 $\pm$  0.016    & 0.048 $\pm$ 0.011    & 0.319 $\pm$ 0.013     & 0.568  $\pm$ 0.061    & 0.228 $\pm$ 0.013   & 5.2 $\pm$ 0.3 \\
    TimeGrad    & 0.067 $\pm$  0.028    & 0.094 $\pm$  0.030   & 0.326 $\pm$  0.024    & 0.540 $\pm$  0.044    & 0.126 $\pm$  0.019  & 5.0 $\pm$ 0.2 \\
    GPVar       & 0.035 $\pm$  0.011    & 0.067 $\pm$  0.008   & 0.290 $\pm$  0.005    & 0.254 $\pm$ 0.028     & 0.145 $\pm$  0.010  & 3.8 $\pm$ 0.2 \\
    \tactis{}   & 0.021 $\pm$  0.005    & 0.042 $\pm$  0.009   & 0.237 $\pm$ 0.013     & 0.311 $\pm$  0.061    & \textbf{0.071 $\pm$ 0.008} & {2.4 $\pm$ 0.2} \\
    % \tactis{}-1.5 & 0.0225 $\pm$ 0.004 & 0.0389 $\pm$ 0.0047 & 0.2944 $\pm$ 0.0108 & 0.2391 $\pm$ 0.0155 & - & - \\    
    \tactisii{} & \textbf{0.020 $\pm$ 0.005} & \textbf{0.035 $\pm$ 0.005} & \textbf{0.234 $\pm$ 0.011} & \textbf{0.240 $\pm$ 0.027} & 0.078 $\pm$ 0.008 & \textbf{1.9 $\pm$ 0.2}  \\
    % \tactisii{} & \textbf{0.0186 $\pm$ 0.0029} $*$ & \textbf{0.0345 $\pm$ 0.0039} & \textbf{0.2344 $\pm$ 0.0066} & \textbf{0.2347 $\pm$ 0.0145} & - & - \\
   \bottomrule
  \end{tabular*}
\end{table}

% Traffic: These are preliminary numbers directly from run. TODO Again

\paragraphtight{Evaluation Protocol:} 
The evaluation follows the protocol of \citet{tactis}, which consists in a backtesting procedure that combines rolling-window evaluation with periodic retraining.
The estimated distributions are scored using the Continuous Ranked Probability Score (CRPS; \citeauthor{matheson1976scoring}, \citeyear{matheson1976scoring}), the CRPS-Sum~\citep{salinas2019high}---a multivariate generalization of the CRPS---and the Energy score~\citep{gneiting2007strictly}, as is standard in the literature. 
We further compare some models using the negative log-likelihood (NLL), which was found to be more effective in detecting errors in modeling multivariate dependencies~\citep{ror}.
Experimental details are provided in \cref{app:experimental-details}.

\paragraphtight{Forecasting Benchmark:} We compare \tactisii{} with the following state-of-the-art multivariate probabilistic forecasting methods: GPVar \citep{salinas2019high}, an LSTM-based method that parametrizes a Gaussian copula; TempFlow \citep{rasul2021multivariate}, a method that parametrizes normalizing flows using transformers; TimeGrad \citep{rasul2021autoregressive}, an autoregressive model based on denoising diffusion; and Stochastic Process Diffusion (SPD) \citep{pmlr-v202-bilos23a}, the only other general-purpose approach which models time series as continuous functions using stochastic processes as noise sources for diffusion. %, in both the discrete (DSPD) and continuous (CSPD) variants 
Moreover, we include the following classical forecasting methods, which tend to be strong baselines~\citep{Makridakis:2018em, Makridakis:2018kw, Makridakis2022m5accuracy}: \textsc{arima}~\citep{box2015time} and ETS exponential smoothing~\citep{hyndman2008forecasting}.

The CRPS-Sum results are reported in \cref{table:results-forecasting-CRPS-Sum}.
% Clearly, \tactisii{} shows state-of-the-art performance, achieving the lowest values over 4 out of 5 datasets.
% Omitting \tactis{}, the confidence intervals of \tactisii{} do not overlap with those of other baselines on 2 out of 5 datasets. \etienne{This could go to 3 if the Newey-West error is lower than the one currently here for kdd-cup.}
Clearly, \tactisii{} shows state-of-the-art performance, achieving the lowest values on 4 out of 5 datasets, while being slightly outperformed by \tactis{} on \texttt{traffic}.
Looking at the average rankings, which are a good indicator of performance as a general-purpose forecasting tool, we see that \tactisii{} outperforms all baselines, with \tactis{} being its closest competitor.
To further contrast these two, we perform a comparison of NLL and report the results in \cref{table:results-forecasting-NLL} and \cref{fig:summary-improvements}.
We observe that \tactisii{} outperforms \tactis{}, additionally with no overlap in the confidence intervals for 3 out of 5 datasets, including \texttt{traffic}.
This is strong evidence that \tactisii{} better captures multivariate dependencies~\citep{ror}, which is plausible given the proposed improvements to the attentional copula.
The results for all other metrics are in line with the above findings and are reported in \cref{app:extra-metrics}.

\begin{table}
    \centering
    \caption{Mean NLL values for forecasting experiments and training FLOP counts. Standard errors are calculated using the Newey-West \citeyearpar{NeweyWest1987,NeweyWest1994} estimator. {Lower is better. Best results in bold. 
    % Additional results in \cref{app:forecasting-addnl-datasets}
    .}} %
    \setlength{\tabcolsep}{4.5pt}
    \label{table:results-forecasting-NLL}
    \footnotesize\renewcommand{\arraystretch}{1.15}%
    \smallskip

  \setlength{\tabcolsep}{0.45ex}
  \hspace*{-0.5ex}%
  \begin{tabular}{@{}rlrrrrrr@{}}
   \toprule
    \multicolumn{1}{@{}c}{\textbf{Model}} & & \multicolumn{1}{c}{\texttt{electricity}} & \multicolumn{1}{c}{\texttt{fred-md}} & \multicolumn{1}{c}{\texttt{kdd-cup}} & \multicolumn{1}{c}{\texttt{solar-10min}} & \multicolumn{1}{c}{\texttt{traffic}} \\
   \midrule
    \tactis{} & NLL   & 11.028 $\pm$ 3.616 & 1.364 $\pm$ 0.253 & 2.281 $\pm$ 0.770 & $-$2.572 $\pm$ 0.093 & 1.249 $\pm$ 0.080 \\
              & FLOPs ($\times 10^{16}$) & {1.931 $\pm$ 0.182} & {1.956 $\pm$ 0.192} & {1.952 $\pm$ 0.208} & {0.174 $\pm$ 0.018} & {1.207 $\pm$ 0.517} \\
    \tactisii{} & NLL   & \textbf{10.674 $\pm$ 2.867} & \textbf{0.378 $\pm$ 0.076} & \textbf{1.055 $\pm$ 0.713} & \textbf{$-$4.333 $\pm$ 0.181} & \textbf{$-$0.358 $\pm$ 0.077} \\
                & FLOPs ($\times 10^{16}$) & \textbf{0.623 $\pm$ 0.018} & \textbf{0.738 $\pm$ 0.022} & \textbf{0.324 $\pm$ 0.014} & \textbf{~0.078 $\pm$ 0.005} & \textbf{~0.289 $\pm$ 0.061} \\
    % \tactisii{} & \textbf{10.566 $\pm$ 0.0034} $*$ & \textbf{0.3984 $\pm$ 0.0438} & \textbf{1.2485 $\pm$ 0.4445} & \textbf{-5.1533 $\pm$ 0.137} & -\\
   \bottomrule
  \end{tabular}
\end{table}

% Note: These are before early stopping so the last 50 epochs are excluded here

% \arjun{\textbf{TODO}: Add one more figure comparing the interpolation of TACTIS vs TACTIS-II -- and next to it add a table comparing the NLL of interpolation}

\begin{wrapfigure}[16]{r}{0.4\textwidth}
    \vspace*{-3pt}
    \centering
    \includegraphics[width=\linewidth]{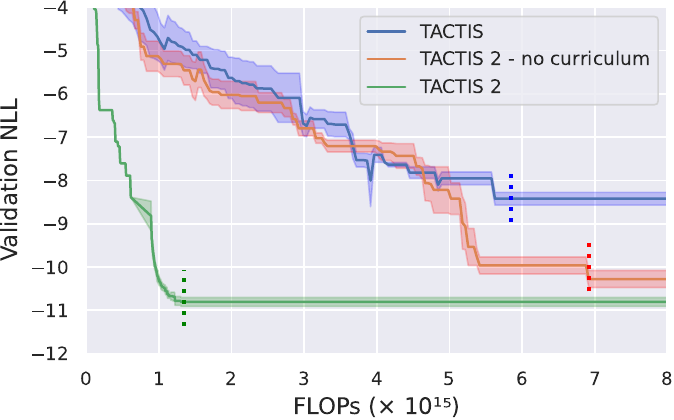}
    \caption{\tactisii{} converges to better NLLs using fewer FLOPs than \tactis{}, as well as an ablation that trains all parameters jointly without the two-stage curriculum. Vertical bars indicate the latest convergence point over 5 runs {with a maximum duration of three days}.}
    \label{fig:flop-budget}
\end{wrapfigure}%
\paragraphtight{Training Dynamics:} As discussed in \cref{sec:copula-density-estimators}, the optimization problem solved by \tactisii{} is considerably simpler than the one solved by \tactis{}.
We quantify this by measuring the number of floating-point operations (FLOPs) required to train until convergence in the forecasting benchmark (see \cref{table:results-forecasting-NLL}).
From these results, it is clear that \tactisii{} achieves greater accuracies while using much less compute than \tactis{}.
As additional evidence of improved training dynamics, we report training curves that compare the NLL, on a validation set, with respect to training FLOPs for \tactisii{}, \tactis{}, and an ablation of \tactisii{} that does not use the two-stage curriculum.
Results for the \texttt{kdd-cup} dataset are reported in \cref{fig:flop-budget} and those for other datasets are available in \cref{app:training-dynamics}.
From these, we reach two conclusions: (i) \tactisii{} converges much faster to better solutions, (ii) the two-stage curriculum is crucial \tactisii{}'s success.
% Experimental details in \cref{app:experimental-details}.

% As shown in \cref{table:results-forecasting-NLL}, in the forecasting experiments, \tactisii{} requires considerably less compute, despite leading to more accurate solutions.
% This is clear evidence that the proposed changes improve training dynamics at no cost in model expressivity.
% We complement these observations with a comparison of likelihood w.r.t. to FLOPs
% Additional results, including a direct comparison of NLL w.r.t. a FLOP budget and an ablation of the proposed curriculum are reported at \cref{app:additional-results}.

\paragraphtight{Interpolation Performance:} While both \tactis{} and \tactisii{} are capable of interpolation, the results reported in \cref{table:results-interpolation-NLL} indicate that \tactisii{} is much better at this task.
This is illustrated in \cref{fig:double-fig-interpol-unaligned}a, which shows an example in which \tactisii{} produces a much more plausible interpolation distribution than \tactis{}. %Additional examples and experimental details are available in \cref{app:additional-results,app:experimental-details}, respectively.
Additional examples are available in \cref{app:interpolation-extra-plots}.
% \arjun{Can be more elaborate}

\begin{table}
    \centering
    \caption{Mean NLL values for the interpolation experiments. Standard errors are calculated using the Newey-West \citeyearpar{NeweyWest1987,NeweyWest1994} estimator. Lower is better and the best results are in bold.} %
    \setlength{\tabcolsep}{4.5pt}
    \label{table:results-interpolation-NLL}
    \footnotesize\renewcommand{\arraystretch}{1.15}%
    \smallskip

  \setlength{\tabcolsep}{0.45ex}
  \hspace*{-0.5ex}%
  \begin{tabular}{@{}rrrrrrr@{}}
   \toprule
    \multicolumn{1}{@{}c}{\textbf{Model}} & \multicolumn{1}{c}{\texttt{electricity}} & \multicolumn{1}{c}{\texttt{fred-md}} & \multicolumn{1}{c}{\texttt{kdd-cup}} & \multicolumn{1}{c}{\texttt{solar-10min}} & \multicolumn{1}{c}{\texttt{traffic}} \\
   \midrule
    \tactis{} & $-$0.159 $\pm$ 0.007  & $-$0.119 $\pm$ 0.021 & 1.050 $\pm$ 0.045 & $-$0.948 $\pm$ 0.089 & 0.825 $\pm$ 0.149 \\
    \tactisii{} & \textbf{$-$0.189 $\pm$ 0.034} & \textbf{$-$0.250 $\pm$ 0.017} & \textbf{0.138 $\pm$ 0.043} & \textbf{$-$3.262 $\pm$ 0.186} & \textbf{$-$0.431 $\pm$ 0.063} \\
    % \tactisii{} & \textbf{10.566 $\pm$ 0.0034} $*$ & \textbf{0.3984 $\pm$ 0.0438} & \textbf{1.2485 $\pm$ 0.4445} & \textbf{-5.1533 $\pm$ 0.137} & -\\
   \bottomrule
  \end{tabular}
\end{table}
\begin{figure}%
    \centering
    % \subfloat[\centering Interpolation (Top: TACTIS-1, Bottom: TACTIS-2)]{{\includegraphics[width=0.46\linewidth]{} }}%
    % [\centering Interpolation (Top: TACTIS-1, Bottom: TACTIS-2)]
    \hspace{3mm}
    \subfloat[Interpolation: \tactisii{} (bottom) produces more accurate interpolation distributions than \tactis{} (top) -- example from \texttt{kdd-cup}.]{{\includegraphics[width=0.45\linewidth]
    %[height=4.2cm, width=6cm]
    % {} }}%
    {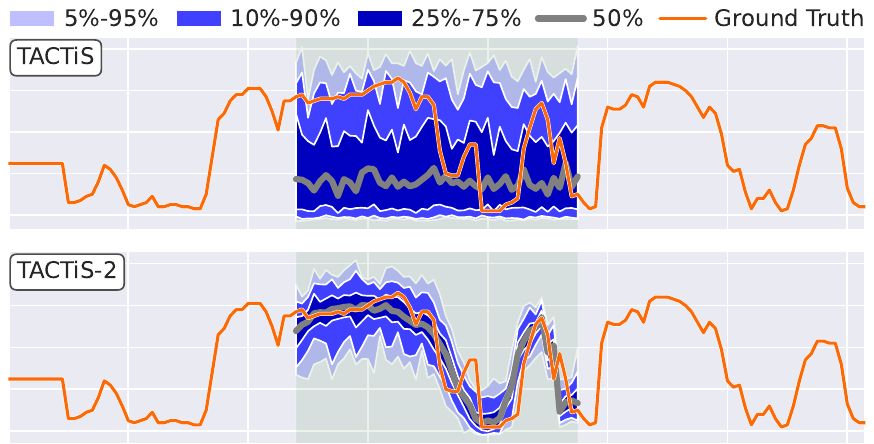} }}% 
    \hfill
    % [\centering Unaligned]
    \subfloat[\tactisii{} correctly forecasts two unaligned/unevenly-sampled series on the noisy sines dataset. Dashed bars indicate measurements.]{{\includegraphics[width=0.45\linewidth]{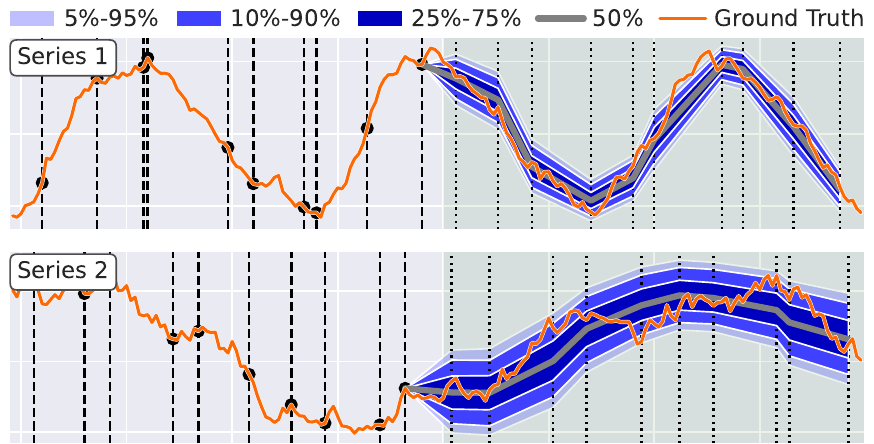} }}%
    \hspace{3mm}
    % \subfloat[\centering KDD]{{\includegraphics[width=6cm]{} }}%
    % \qquad
    \caption{An illustration of the flexibility of \tactisii{}.}
    \vspace{-2mm}
    % \caption{(a) FLOPs on KDD \textbf{(rough placeholder figure)}. Red line is TACTIS-2. Blue line is TACTIS-1.5. (b) Interpolation on KDD - TACTIS vs TACTIS-2 \textbf{(rough placeholder figure)}. \arjun{Many changes to be made to the plot. This is not the final plot FYI. I will smoothen all curves as well. Also maybe add a note that says the line of \tactisii{} is extended after convergence for reference.} \alex{Also add axis labels plz} \arjun{Sure}\vale{legend, x-axis in log scale} \arjun{Yes, I will add them} \arjun{\textbf{TODO}: Remove the figure on the right, and move to the NLL table to the right, and add FLOPs to that table}}%
    \label{fig:double-fig-interpol-unaligned}%
\end{figure}

\paragraphtight{Model Flexibility:} 
Like \tactis{}, the \tactisii{} architecture supports heterogenous datasets composed of unaligned series with uneven sampling frequencies. We illustrate this by replicating an experiment of \citet{tactis} which consists in forecasting a bivariate noisy sine process with irregularly-spaced observations. The results in \cref{fig:double-fig-interpol-unaligned}b show that \tactisii{} faithfully performs forecasting in this setting, preserving the flexiblity of \tactis{}. {Additional results on real-world datasets are available in \cref{app:real-world-flexibility}.}
%Experimental details are available in \cref{app:experimental-details}.

\vspace{-2mm}
\section{Related Work}
\vspace{-1mm}

% We now review related work in the fields of probabilistic time series forecasting and copula-based density estimation. Since it has been extensively discussed, we leave out \tactis{}~\citep{tactis}.

\paragraphtight{Deep Learning for Probabilistic Time Series Prediction:}
The majority of prior work in this field has been geared toward forecasting tasks.
Early work explored the application of recurrent and convolutional neural networks to estimating univariate forecast distributions~\citep{rangapuram_deep,shih2019temporal,Chen:2020bj,deBezenac2020NKF,Yanchenko2020stanza}.
More relevant to the present work are methods for \emph{multivariate} forecasting,
% We now review them in light of the desiderata, (\ref{des:largescale}) to (\ref{des:interpol}), outlined at \cref{sec:intro}.
which we review in light of the desiderata outlined in \cref{sec:intro}.
% Many such works lack the flexibility to model arbitrary joint distributions, failing to meet (\ref{des:arbitrary}).
Many such works lack the flexibility to model arbitrary joint distributions.
For instance, DeepAR~\citep{salinas2020deepar} uses an RNN to parametrize an autoregressive factorization of the joint density using a chosen parametric form (e.g., Gaussian).
Similarly, GPVar~\citep{salinas2019high} uses an RNN to parametrize a low-rank Gaussian copula approximation of the joint distribution.
\citet{wu2020adversarialSparse} rely on an adversarial sparse transformer to estimate conditional quantiles of the predictive distribution but do not model the full joint density.
% Other models allow for arbitrary predictive distribution, satisfying (\ref{des:arbitrary}).
Other approaches relax such limitations, by enabling the estimation of arbitrary distributions.
For instance, TempFlow \citep{rasul2021multivariate} uses RNNs and transformers~\citep{vaswani2017attention} to parametrize a multivariate normalizing flow \citep{Papamakarios2021normalizing}. TimeGrad~\citep{rasul2021autoregressive} models the one-step-ahead predictive joint distribution with a denoising diffusion model \citep{ho2020denoising,sohl-dickstein2015deep} conditioned on an RNN.
% However, all of the above lack the flexibility to handle unaligned/unevenly-sampled series (\ref{des:irregular}), missing data (\ref{des:missing}), and are limited to forecasting (\ref{des:interpol}).
However, all of the above lack the flexibility to handle unaligned/unevenly-sampled series, missing data, and are limited to forecasting.

Recently, multiple works have made progress toward our desiderata. Among those, \tactis{}~\citep{tactis}, which we have extensively described, and SPD~\citep{pmlr-v202-bilos23a}, which uses stochastic processes as noise sources during diffusion, are the only ones that satisfy all of them.
An alternative approach, CSDI~\citep{tashiro2021csdi}, which relies on a conditional score-based diffusion model trained using a self-supervised interpolation objective, supports both forecasting and interpolation.
However, it cannot learn from unaligned time series.
Similarly, SSSD~\citep{alcaraz2023diffusionbased}, a conditional diffusion model that relies on structured state-space models \citep{gu2022efficiently} as internal layers, can perform both forecasting and interpolation but does not support unaligned and unevenly-sampled time series.

\paragraphtight{Copulas for Probabilistic Forecasting:} Copula-based models for multivariate forecasting have been extensively studied in economics and finance~\citep{Patton2012ARO, Grosser2021copulae}, with applications such as modeling financial returns and volatility~\citep{bouyedynamic, bouyedynamicdependence, wang2020forecasting}.
A common semiparametric approach consists of combining a parametric copula, e.g., Archimedean, with nonparametric empirical estimates of the marginal CDFs (ECDFs).
One notable example is the work of \citet{salinas2019high}, which combines ECDFs with an RNN that dynamically parametrizes a Gaussian copula.
A key limitation of such approaches is that ECDF estimates are only valid for stationary processes. %that lack distribution shifts.
Moving away from this assumption, \citet{wen2019deep} propose to use neural networks to model the CDFs in addition to a Gaussian copula.
% However, like in the aforementioned work, they make strong parametric assumptions which we seek to avoid.
However, the choice of a Gaussian copula is still a strong parametric assumption which we seek to avoid.
Closer to our work are methods that relax such parametric assumptions, such as \citet{Krupskii2020flexible, mayer2021estimation, 8464297, tactis}.
% Of note, \citet{8464297} use a neural network to parametrize a nonparametric copula modeled as a histogram distribution on a $d$-dimensional hypercube.
% Although similar to ours, this approach scales poorly with $d$, a problem that we avoid by factorizing the joint distribution.
\citet{8464297} use the historical data to estimate a histogram-based nonparametric copula.
A key caveat to this approach is that it assumes that the historical data is independent and identically distributed.
Finally, the work most similar to ours is the nonparametric approach of \citet{tactis}, which, as extensively described, we significantly simplify and improve upon.

\paragraphtight{Copulas in Machine Learning:}
Beyond time series, copulas have been applied to various machine learning problems, such as domain adaptation \citep{LopezPaz2012SemiSupervisedDA}, variational inference \citep{copula_vi, copula-like-vi}, learning disentangled representations \citep{wieser*2018learning}, dependency-seeking clustering \citep{copulamixturemodel}, and generative modeling \citep{sexton2022vine, tagasovska2019copulas, Wang2019NeuralGC}.
Hence, we emphasize that our proposed transformer-based nonparametric copulas are applicable beyond time series and review two closely related works.
First, \citet{Janke2021} propose an approach to learning nonparametric copula that uses a generative adversarial network~\citep{gans} to learn a latent distribution on the unit cube.
This distribution is then transformed into a valid copula distribution using its ECDFs.
As opposed to our approach, which is fully differentiable, their reliance on ECDFs leads to a non-differentiable objective that must be approximated during training.
Second, \citet{wiese2019copula} use normalizing flows to parametrize both the marginals and the copulas, resulting in an approach that is fully differentiable. However, they focus on the bivariate case and rely on vine copulas to extend their approach to multivariate copulas, something that our approach does not require.

\vspace{-1mm}
\section{Discussion}
\vspace{-1mm}

% This work introduces \tactisii{}, a general-purpose model for multivariate probabilistic time series prediction, that combines the flexibility of transformers with a new approach to learning attention-based nonparametric copulas.
% We show that, while learning such copulas requires specialized training procedures (\cref{thm:degenerate-solutions}), the permutation-based approach~(\cref{sec:attn-copulas}) used in \tactis{}~\citep{tactis} is unnecessarily complex.
% We remove the need for such permutations by proposing a new two-stage training curriculum~(\cref{sec:curriculum-attn-copulas}), along with necessary architectural changes~(\cref{sec:tactis2}).
% That is, we replace the optimization problem of \citet{tactis} (\cref{sec:attn-copulas}) by a two-stage problem (\cref{sec:curriculum-attn-copulas}), where the number of distribution parameters scales linearly, instead of factorially, with the forecast dimensionality.
% We then apply these findings to \tactis{} by proposing a new two-stage training curriculum, along with necessary architectural changes.
% We show that these changes lead to significantly improved training dynamics and large gains in forecasting and interpolation accuracy on benchmarks composed of several real-world datasets.
% Finally, we show that \tactisii{} maintains the flexibility of its predecessor, supporting all of the desiderata outlined in \cref{sec:intro}.

This work introduces \tactisii{}, a general-purpose model for multivariate probabilistic time series prediction, combining the flexibility of transformers with a new approach to learning attention-based non-parametric copulas.
\tactisii{} establishes itself as the new state-of-the-art model for forecasting on several real-world datasets, while showing better training dynamics than its predecessor, \tactis{}. 
This superior performance is mainly due to its simplified optimization procedure, which ultimately allows it to reach better solutions, in particular better copulas. 
% In contrast, its predecessor \tactis{} is significantly slower, as it jointly optimizes the marginal and copula components, while having to achieve permutation invariance in order to converge to valid copulas. 
\tactisii{}'s performance is further enhanced by the use of a dual-encoder that learns representations specialized for each distributional component.

% We now discuss the limitations of \tactisii{}. The proposed objective (\ref{eq:second}) guarantees convergence to valid copulas only in the asympototic regime, however SGD is notorious for converging to local minimas, 
There are multiple ways in which \tactisii{} could be improved. 
For example, it would be interesting to incorporate inductive biases specific to time series data into the architecture, e.g., using Fourier features to learn high-frequency patterns \citep{woo2022etsformer}, and auto-correlation mechanisms in the attention layers~\citep{wu2021autoformer}. Next, \tactisii{} is limited to working with continuous data due to its usage of normalizing flows for the marginal distributions. This limitation could be addressed by building on prior work \citep{tran_discrete_flows, pmlr-v97-ziegler19a} to adapt the normalizing flows to work with discrete distributions.
Finally, it is important to note that, as for all related work~\citep{wiese2019copula,tactis,Janke2021}, the nonparametric copulas learned by \tactisii{} are valid in the limit of infinite data and capacity.
As future work, it would be interesting to study the convergence properties of these approaches to valid copulas in settings with finite samples and non-convex optimization landscapes.

% \alex{Discuss limitations: convergence to valid copula only guaranteed in the assymptotical regime, not guaranteed with SGD where you have local minima (same for \tactis{} though). Also we don't use  specific mechanics (e.g., in the positional embeddings, but could build on CITE)}

% The flexiblity of \tactisii{} makes it a suitable \textit{general-purpose} model for arbitrary  prediction tasks.
Beyond the scope of this work, there are interesting settings in which the capabilities of \tactisii{} could be studied further. The proposed form of fully decoupling the marginal distributions and the dependency structure could be especially useful in handling distribution shifts, which are common in real-world time series \citep{yao2022wildtime, gagnon-audet2023woods}. Such a decoupling allows changes in either the marginal distributions or the dependency structure or both these factors to be understood separately in scenarios of distribution shift, allowing to adapt the model appropriately.
% two sample tests, and transferred across domains in order to adapt the target task density model.
% , where the marginal distribution of the data changes but the dependency structure remains the same. \tactisii{} has an inherent advantage in such a setup, as just the marginal distributions of the model can be adapted quickly, while keeping the copula fixed. 
% In a similar vein, \tactisii{} would be an advantageous candidate for cases of anomaly detection in  where the marginal distributions are anomolous but dependency structure is preserved. 
Next, it would be interesting to study large-scale training of \tactisii{} on several related time series from a specific domain, where marginal distributions can be trained for each series, while the attentional copula component can be shared. 
% \alex{do you mean subsets of time series? What does it mean to fine-tune a copula for one time series?} \arjun{i meant domains like finance/weather where there are shared copula structures but various marginals}. 
% Such an approach would be especially useful in limited data setups, where there is limited data per , where such transfer from pretraining would significantly help generalization.\alex{not super clear but I see where you're going}
Finally, it would be interesting to exploit the flexibility of \tactisii{} for multitask pretraining, where the model is jointly trained on multiple probabilistic prediction tasks such as forecasting, interpolation and imputation, and can be used as a general-purpose model downstream.
% A step further, it would be interesting to explore training \tactisii{} on data from multiple domains where the copula is not fully shared between the domains, and is instead partly shared and 
% This would result in interesting conclusions about the ability of \tactisii{} to transfer information between datasets in the attentional copula.
Such extensions to \tactisii{} constitute exciting directions towards foundation models \citep{bommasani2021opportunities} for time series.

\section*{Acknowledgements}

The authors are grateful to Putra Manggala, Sébastien Paquet, and Sokhna Diarra Mbacke for their valuable feedback and suggestions. The authors are grateful to Juan Rodriguez and Daniel Tremblay for their help with the experiments. This research was supported by a Mitacs Accelerate Grant.

\bibliography{refs,tactis}
\bibliographystyle{iclr2024_conference}

\newpage
\appendix
\addcontentsline{toc}{section}{Appendix} % Add the appendix text to the document TOC
\part{Appendix} % Start the appendix part
\parttoc % Insert the appendix TOC

\section{Metrics and Additional Results} \label{app:additional-results}

\subsection{Metric Definitions} \label{app:metrics_definitions}

\paragraph{CRPS:} The Continuous Ranked Probability Score \citep{matheson1976scoring,gneiting2007strictly} is a univariate metric used to assess the accuracy of a forecast distribution $X$, given a ground-truth realization $x^*$:
\begin{equation}
    \text{CRPS}(x^*,X) \eqdef 2 \int_{0}^{1} \left( \mathbf{1}\mkern-4mu\left[F^{-1}(q) - x^*\right] - q \right) \left( F^{-1}(q) - x^* \right) dq,
    \label{eq:crps}
\end{equation}
where $F(x)$ is the cumulative distribution function (CDF) of the forecast distribution, and $\mathbf{1}[x]$ is the Heaviside function.
We follow previous authors in normalizing the CRPS results using the mean absolute value of the ground-truth realizations of each series, before taking the average over all series and time steps.
Since the CRPS is a univariate metric, it fully ignores the quality of the multivariate dependencies in the forecasts being assessed.

\paragraph{CRPS-Sum:} The CRPS-Sum \citep{salinas2019high} is an adaptation of the CRPS to take into account some of the multivariate dependencies in the forecasts.
The CRPS-Sum is the result of summing both the forecast and the ground-truth values over all series, before computing the CRPS over the resulting sums.

\paragraph{Energy Score:} The Energy Score (ES) \citep{gneiting2007strictly} is a multivariate metric.
For a multivariate forecast distribution $\mathbf{X}$ and ground-truth realisation $\mathbf{x}^*$, it is computed as:
\begin{equation}
    \text{ES}(\mathbf{x}^*,\mathbf{X}) \eqdef
    \expect_{\mathbf{x} \sim \mathbf{X}} \left| \mathbf{x} - \mathbf{x}^* \right|_2^\beta
    - \frac{1}{2} \expect_{\substack{\mathbf{x} \sim \mathbf{X} \\ \mathbf{x}' \sim \mathbf{X}}} \left| \mathbf{x} - \mathbf{x}' \right|_2^\beta,
    \label{eq:energy}
\end{equation}
where $|\mathbf{x}|_2$ is the Euclidean norm and $0 < \beta \le 2$ is a parameter we set to 1.

\paragraph{NLL:} The Negative Log-Likelihood (NLL) is a metric that directly uses the forecast probability distribution function (PDF) instead of sampling from it.
In the case of a continuous distribution with PDF $p(\mathbf{x})$ and a ground-truth realisation $\mathbf{x}^*$, the NLL is as follow:
\begin{equation}
    \text{NLL}(\mathbf{x}^*,p(\mathbf{x})) \eqdef - \log p(\mathbf{x}^*).
    \label{eq:nll}
\end{equation}
We normalize the NLL results by dividing it by the number of dimensions of the forecast.

\paragraph{Standard Error Computation:} In all of our metric results, the standard errors (indicated by $\pm$) are computed using the Newey-West \citeyearpar{NeweyWest1987,NeweyWest1994} estimator.
This estimator takes into account the autocorrelation and heteroscedasticity that are inherent to the sequential nature of our backtesting procedure, leading to wider standard errors than those computed using the assumption that the metric values for each backtesting period are independent.
In particular, we use the implementation from the R \texttt{sandwich} package~\citep{zeileis2020sandwich}, with the Bartlett kernel weights, 3 lags, and automatic bandwidth selection.

\subsection{Results on Alternative Forecasting Metrics} \label{app:extra-metrics}

% \paragraph{Computing Standard Errors:}

% \arjun{High-level explanation of Newey-west}

% \paragraph{CRPS:}

\begin{table}[h!]
    \centering
    \caption{CRPS means: Averaged across all backtest sets. Standard errors are calculated using the Newey-West \citeyearpar{NeweyWest1987,NeweyWest1994} estimator. Average ranks report the average ranking of methods across all evaluation windows and datasets. Lower is better and the best results are in bold.} %
    \setlength{\tabcolsep}{4.5pt}
    \label{table:results-CRPS}
    \footnotesize\renewcommand{\arraystretch}{1.15}%
    \smallskip

  \setlength{\tabcolsep}{0.45ex}
  \hspace*{-0.5ex}%
  \begin{tabular*}{\textwidth}{@{\extracolsep{\fill}}rcccccc@{}}
   \toprule
    \multicolumn{1}{@{}c}{\textbf{Model}} & \multicolumn{1}{c}{\texttt{electricity}} & \multicolumn{1}{c}{\texttt{fred-md}} & \multicolumn{1}{c}{\texttt{kdd-cup}} & \multicolumn{1}{c}{\texttt{solar-10min}} & \multicolumn{1}{c}{\texttt{traffic}}   & \multicolumn{1}{@{}c@{}}{\textbf{Avg. Rank}} \\
   \midrule
  ETS          & 0.094 $\pm$ 0.014    & 0.050 $\pm$ 0.011    & 0.560 $\pm$ 0.028    & 0.844 $\pm$ 0.119    & 0.437 $\pm$ 0.012  & 6.5 $\pm$ 0.2 \\
  Auto-\textsc{arima}   
               & 0.129 $\pm$ 0.015    & 0.052 $\pm$ 0.005    & 0.477 $\pm$ 0.015    & 0.636 $\pm$ 0.060    & 0.310 $\pm$ 0.004  & 5.7 $\pm$ 0.2 \\
  TempFlow     & 0.109 $\pm$ 0.024    & 0.110 $\pm$ 0.003    & 0.451 $\pm$ 0.005    & 0.547 $\pm$ 0.036    & 0.320 $\pm$ 0.015  & 5.6 $\pm$ 0.2 \\
  TimeGrad     & 0.101 $\pm$ 0.027    & 0.142 $\pm$ 0.058    & 0.495 $\pm$ 0.023    & 0.560 $\pm$ 0.047    & 0.217 $\pm$ 0.015   & 5.2 $\pm$ 0.3 \\
  SPD & 0.099 $\pm$ 0.016 & 0.058 $\pm$ 0.011  & 0.465 $\pm$ 0.005 & 0.585 $\pm$ 0.050  & 0.283 $\pm$ 0.007  & 5.1 $\pm$ 0.3 \\
  GPVar        & 0.067 $\pm$ 0.010    & 0.086 $\pm$ 0.009    & 0.459 $\pm$ 0.009    & {0.298 $\pm$ 0.034} & 0.213 $\pm$ 0.009  & 4.1 $\pm$ 0.1 \\
  \tactis{}  & {0.052 $\pm$ 0.006} & {0.048 $\pm$ 0.010}& {0.420 $\pm$ 0.007} & 0.326 $\pm$ 0.049    & \textbf{{0.161 $\pm$ 0.009}}  & 2.2 $\pm$ 0.1 \\
    % \tactis{} & 0.0502 $\pm$ 0.004 & 0.0478 $\pm$ 0.0049 & 0.4361 $\pm$ 0.0076 & 0.2623 $\pm$ 0.0172 & - & - \\
    \tactisii{} & \textbf{0.049 $\pm$ 0.006} & \textbf{0.043 $\pm$ 0.006} & \textbf{0.413 $\pm$ 0.007} & \textbf{0.256 $\pm$ 0.029} & {0.162 $\pm$ 0.010}  & \textbf{1.6 $\pm$ 0.2} \\
   \bottomrule
  \end{tabular*}
\end{table}

% Results on the CRPS metric are shown in \cref{table:results-CRPS}.

% \paragraph{Energy Score:}

\begin{table}[h!]
    \centering
    \caption{Energy score means: Averaged across all backtest sets. Standard errors are calculated using the Newey-West \citeyearpar{NeweyWest1987,NeweyWest1994} estimator. Average ranks report the average ranking of methods across all evaluation windows and datasets. Lower is better and the best results are in bold.} %
    \setlength{\tabcolsep}{4.5pt}
    \label{table:results-energy-score}
    \footnotesize\renewcommand{\arraystretch}{1.15}%
    \smallskip

  \setlength{\tabcolsep}{0.45ex}
  \hspace*{-0.5ex}%
  \begin{tabular*}{\textwidth}{@{\extracolsep{\fill}}rcccccc@{}}
   \toprule
      \multicolumn{1}{@{}c}{\textbf{Model}} & \multicolumn{1}{c}{\texttt{electricity}} & \multicolumn{1}{c}{\texttt{fred-md}} & \multicolumn{1}{c}{\texttt{kdd-cup}} & \multicolumn{1}{c}{\texttt{solar-10min}} & \multicolumn{1}{c}{\texttt{traffic}}   & \multicolumn{1}{@{}c@{}}{\textbf{Avg. Rank}} \\
               & $\times 10^4$        & $\times 10^5$        & $\times 10^3$        & $\times 10^2$        & $\times 10^0$   \\
 \midrule
  Auto-\textsc{arima}   
               & 44.59 $\pm$ 8.56~~     & 8.72 $\pm$ 0.81      & 18.76 $\pm$ 3.31~~     & 19.42 $\pm$ 3.37~~     & 4.10 $\pm$ 0.05  & 6.7 $\pm$ 0.2 \\
  TempFlow     & 10.25 $\pm$ 2.03~~     & 20.16 $\pm$ 0.74~~     & 3.30 $\pm$ 0.28      & 4.25 $\pm$ 0.16      & 4.59 $\pm$ 0.25  & 6.1 $\pm$ 0.3 \\
  ETS          & 7.94 $\pm$ 0.93      & {7.90 $\pm$ 1.88}  & 3.60 $\pm$ 0.24      & 4.74 $\pm$ 0.17      & 4.98 $\pm$ 0.07  & 5.7 $\pm$ 0.2 \\
  TimeGrad     & 9.69 $\pm$ 2.62      & 19.87 $\pm$ 7.23~~     & 3.30 $\pm$ 0.19      & 4.31 $\pm$ 0.23      & 3.38 $\pm$ 0.11  & 4.7 $\pm$ 0.3 \\
  SPD & 8.90 $\pm$ 1.18 & 8.94 $\pm$ 1.82 & 3.13 $\pm$ 0.24 & 3.68 $\pm$ 0.31 & 3.94 $\pm$ 0.10   & 4.4 $\pm$ 0.3 \\
  GPVar        & 6.80 $\pm$ 0.62      & 11.43 $\pm$ 1.60~~     & 3.18 $\pm$ 0.20      & {2.60 $\pm$ 0.10}  & 3.57 $\pm$ 0.10  & 3.9 $\pm$ 0.2 \\
  \tactis{}  & {5.42 $\pm$ 0.57}  & 8.18 $\pm$ 1.83   & {2.93 $\pm$ 0.22}  & 2.88 $\pm$ 0.23      & \textbf{{3.10 $\pm$ 0.13}}   & 2.7 $\pm$ 0.3 \\
    % \tactis{} & 5.08 $\pm$ 0.38 & 7.72 $\pm$ 0.93 & 2.96 $\pm$ 0.11 & 2.41 $\pm$ 0.07 & - & - \\
    \tactisii{} & \textbf{4.91 $\pm$ 0.52} & \textbf{6.72 $\pm$ 0.10} & \textbf{2.81 $\pm$ 0.19} & \textbf{2.37 $\pm$ 0.14} & {3.36 $\pm$ 0.27} & \textbf{1.9 $\pm$ 0.2} \\
   \bottomrule
  \end{tabular*}
\end{table}

% Results on the Energy score metric are shown in \cref{table:results-energy-score}.

\subsection{Training Dynamics} \label{app:training-dynamics}

We now look deeper into the training dynamics of \tactisii{}. \cref{fig:fig-flops-kdd,fig:fig-flops-solar} compare the validation NLL with respect to the number of FLOPs performed during training on the \texttt{kdd-cup} and \texttt{solar-10min} datasets respectively. We present such an analysis for \tactis{}, \tactisii{}, and we further consider an ablation of \tactisii{} where the proposed two-stage curriculum (see \cref{sec:tactis2}) is not used. Instead, all the model parameters are trained jointly to minimize the NLL, in a single stage. In doing so, the resulting attentional copulas are absolutely not guaranteed to be valid (\cref{thm:degenerate-solutions}). Such an ablation study allows us to understand if the increased performance and efficiency come from the architecture of \tactisii{} or the two-stage curriculum itself. 

From the results, it is clear that \tactisii{} converges in much fewer FLOPs and to better likelihoods, compared to \tactis{}. Notably, \tactisii{} achieves better performance at any given FLOP budget. \tactisii{} without the curriculum goes to slightly better negative-log-likelihoods than \tactis{}, but requires more FLOPs to reach convergence. This suggests that, in addition to producing valid copulas, the modular architecture of \tactisii{} and its two-stage curriculum, help it achieve better performance and efficiency.

\begin{figure}[h!]%
    \centering
    % \subfloat[\centering Interpolation (Top: TACTIS-1, Bottom: TACTIS-2)]{{\includegraphics[width=0.46\linewidth]{figures/interpolation_test.pdf} }}%
    % [\centering Interpolation (Top: TACTIS-1, Bottom: TACTIS-2)]
    % \hspace{3mm}
    {{\includegraphics[width=0.95\linewidth]{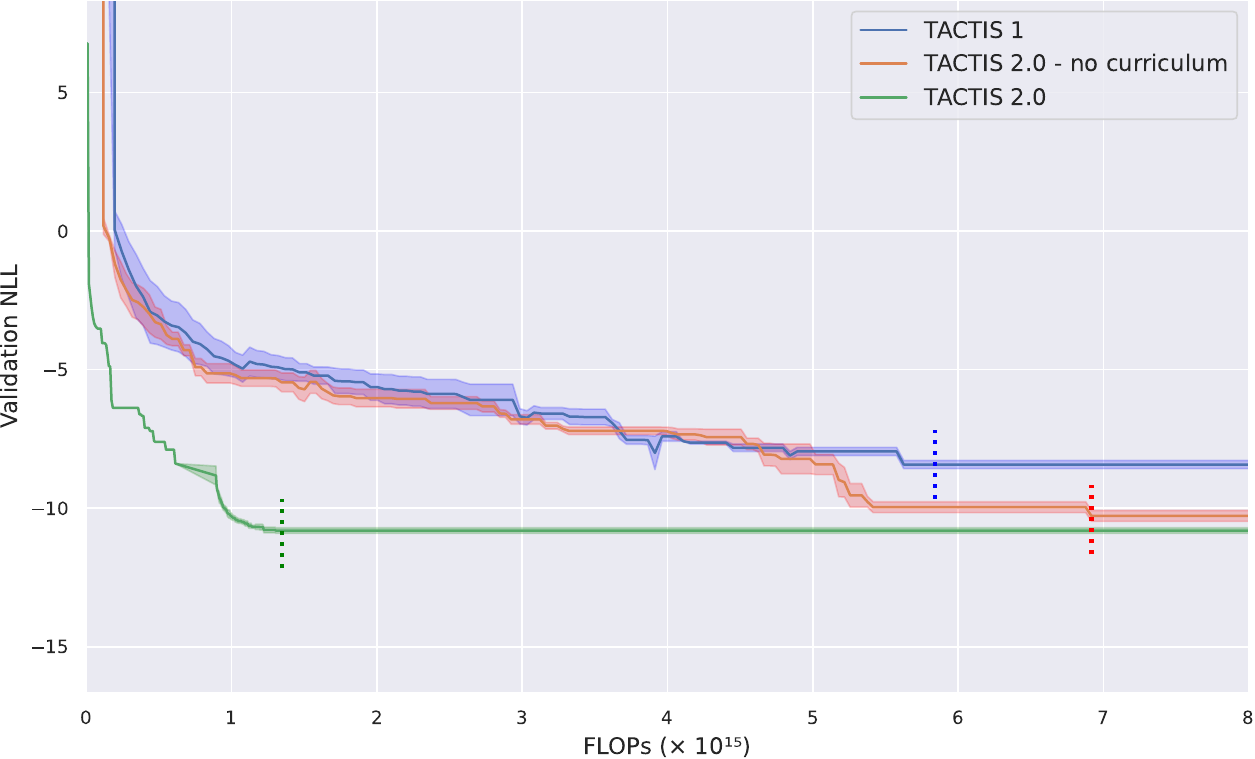} }}% 
    \caption{FLOPs consumed vs validation negative log-likelihood on the \texttt{kdd-cup} dataset. The results are means of 5 seeds; the shaded region shows the standard error between the seeds. The dashed vertical lines on each curve indicate the latest point of convergence over the 5 seeds with a maximum training duration of three days.}
    % \hfill
    % % [\centering Unaligned]
    % \subfloat[Example from SOLAR]
    % {{\includegraphics[width=0.45\linewidth]{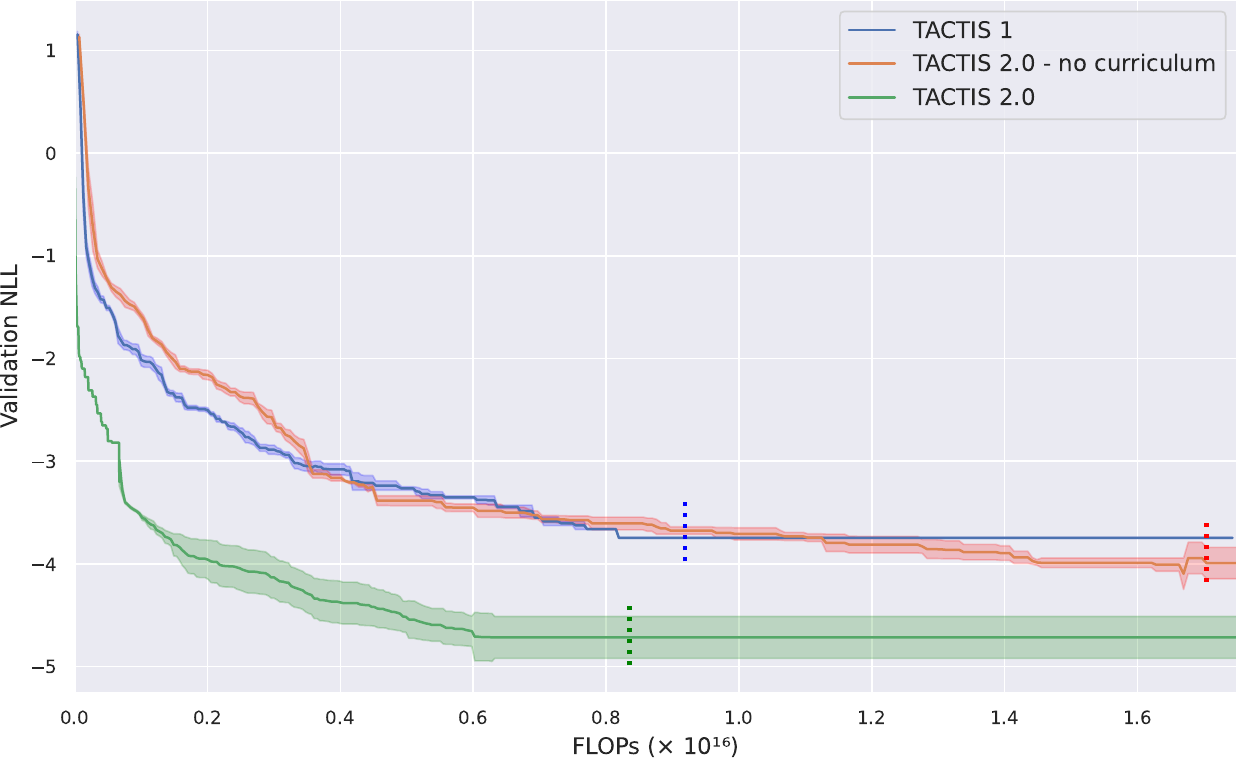} }}%
    % \hspace{3mm}
    % % \subfloat[\centering KDD]{{\includegraphics[width=6cm]{figures/kdd-cut.png} }}%
    % % \qquad
    % \caption{An illustration of the training dynamics of \tactisii{}. \arjun{Need to modify this figure a lot}}
    % % \caption{(a) FLOPs on KDD \textbf{(rough placeholder figure)}. Red line is TACTIS-2. Blue line is TACTIS-1.5. (b) Interpolation on KDD - TACTIS vs TACTIS-2 \textbf{(rough placeholder figure)}. \arjun{Many changes to be made to the plot. This is not the final plot FYI. I will smoothen all curves as well. Also maybe add a note that says the line of \tactisii{} is extended after convergence for reference.} \alex{Also add axis labels plz} \arjun{Sure}\vale{legend, x-axis in log scale} \arjun{Yes, I will add them} \arjun{\textbf{TODO}: Remove the figure on the right, and move to the NLL table to the right, and add FLOPs to that table}}%
    \label{fig:fig-flops-kdd}%
\end{figure}

\begin{figure}[h!]%
    \centering

    {{\includegraphics[width=0.95\linewidth]{} }}%
    \caption{FLOPs consumed vs validation negative log-likelihood on the \texttt{solar-10min} dataset. The results are means of 5 seeds; the shaded region shows the standard error between the 5 seeds. The dashed vertical lines on each curve indicate the latest point of convergence over the 5 seeds with a maximum training duration of three days.}

    \label{fig:fig-flops-solar}%
\end{figure}

\subsection{Qualitative Results on Forecasting} \label{app:forecasting-extra-plots}

\cref{fig:fig-solar-forecasts,fig:fig-electricity-forecasts} show a few examples forecasts by \tactisii{} on the \texttt{solar-10min} and \texttt{electricity} datasets, respectively.
These were picked as examples of particularly good forecasts.

\begin{figure}[h!]%
    \centering
    {{\includegraphics[width=0.95\linewidth]{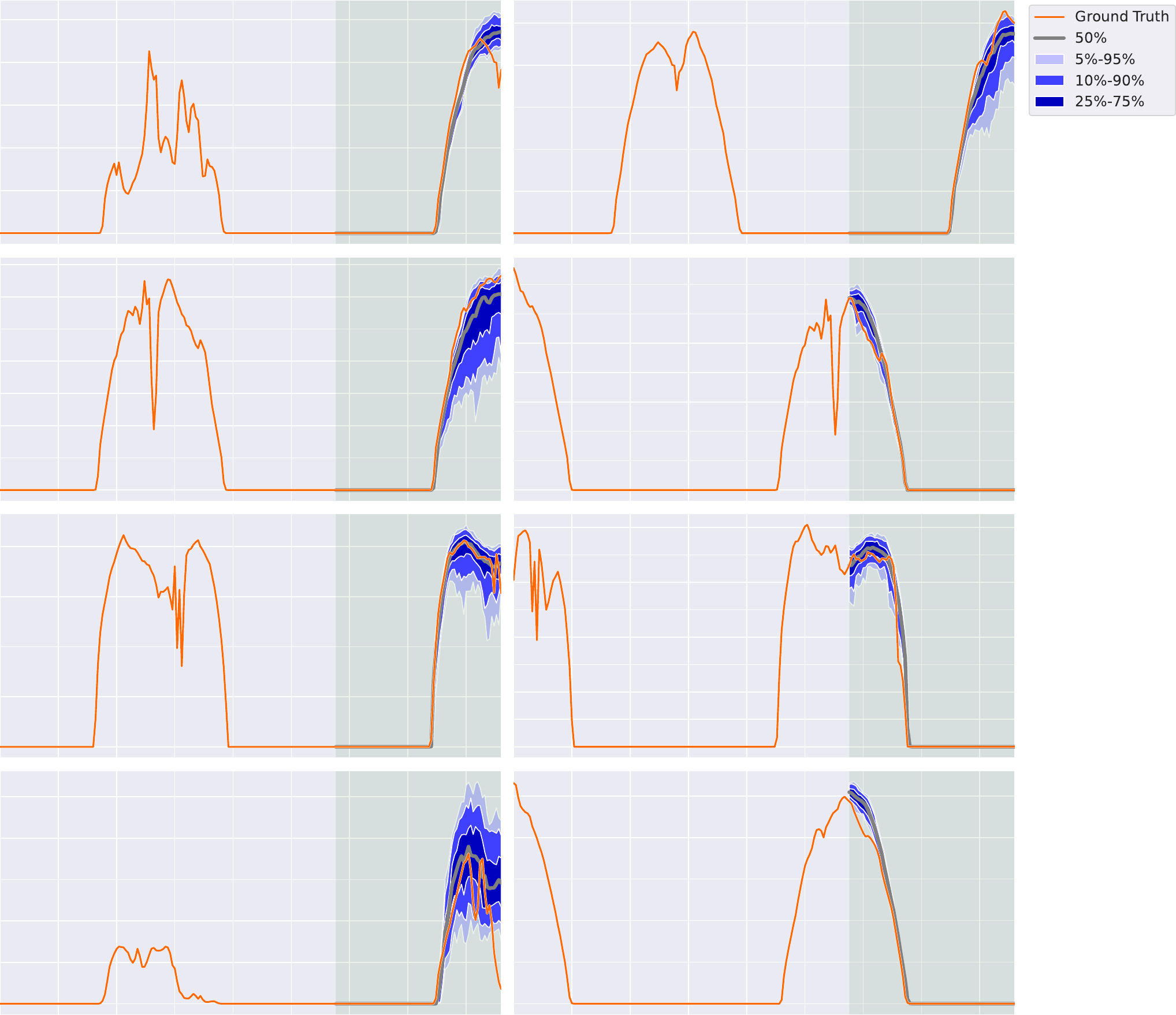} }}%
    \caption{Example forecasts by \tactisii{} on the \texttt{solar-10min} dataset, along with the historical ground truth.}
    \label{fig:fig-solar-forecasts}%
\end{figure}

\begin{figure}[h!]%
    \centering
    {{\includegraphics[width=0.95\linewidth]{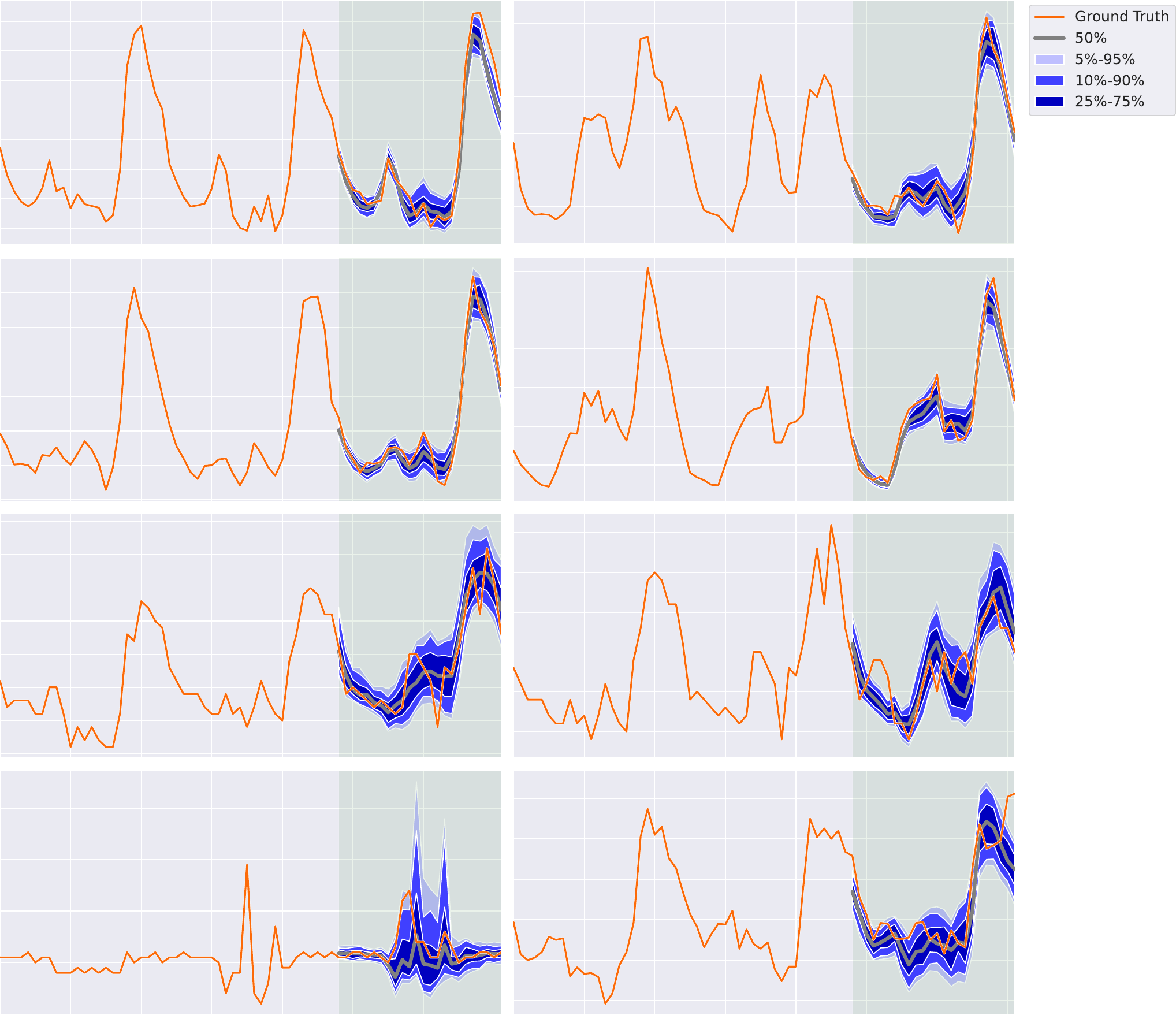} }}%
    \caption{Example forecasts by \tactisii{} on the \texttt{electricity} dataset, along with the historical ground truth.}
    \label{fig:fig-electricity-forecasts}%
\end{figure}

\FloatBarrier

\subsection{Qualitative Results on Interpolation} \label{app:interpolation-extra-plots}

\begin{figure}[h!]%
    \centering
    {{\includegraphics[width=0.95\linewidth]{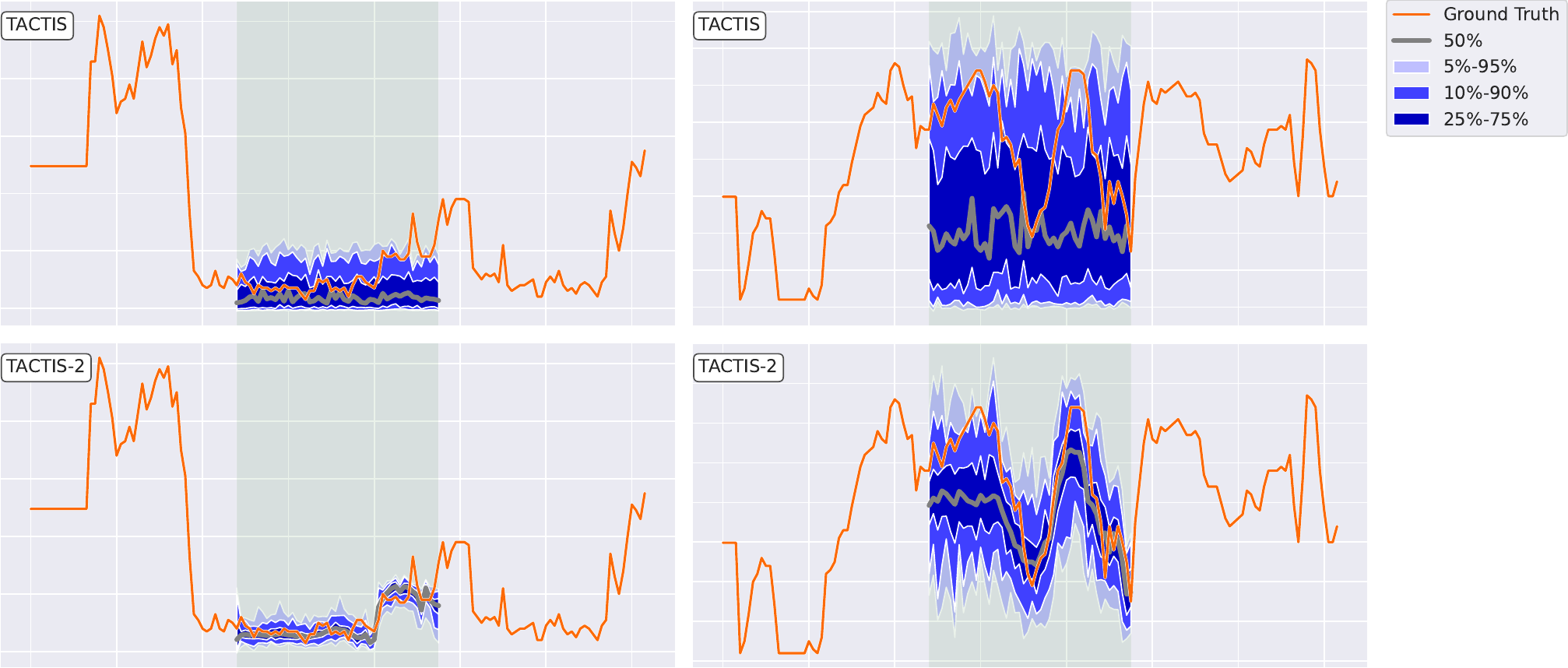} }}%
    \caption{Example interpolations by \tactis{} (top) and \tactisii{} (bottom) on the \texttt{kdd-cup} dataset, along with the context provided to the model.}
    \label{fig:fig-kdd-interpols-1}%
\end{figure}

\begin{figure}[h!]%
    \centering
    {{\includegraphics[width=0.95\linewidth]{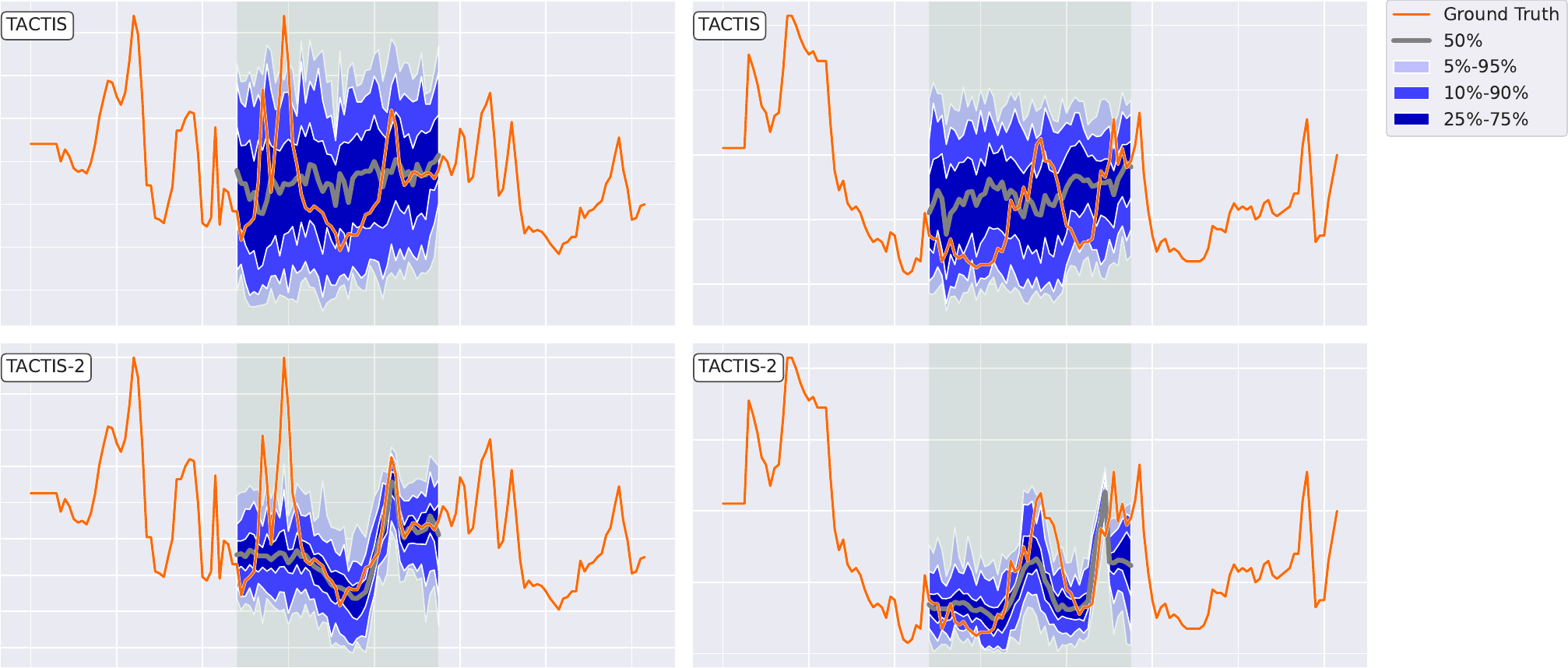} }}%
    \caption{More example interpolations by \tactis{} (top) and \tactisii{} (bottom) on the \texttt{kdd-cup} dataset, along with the context provided to the model.}
    \label{fig:fig-kdd-interpols-2}%
\end{figure}

\begin{figure}[h!]%
    \centering
    {{\includegraphics[width=0.95\linewidth]{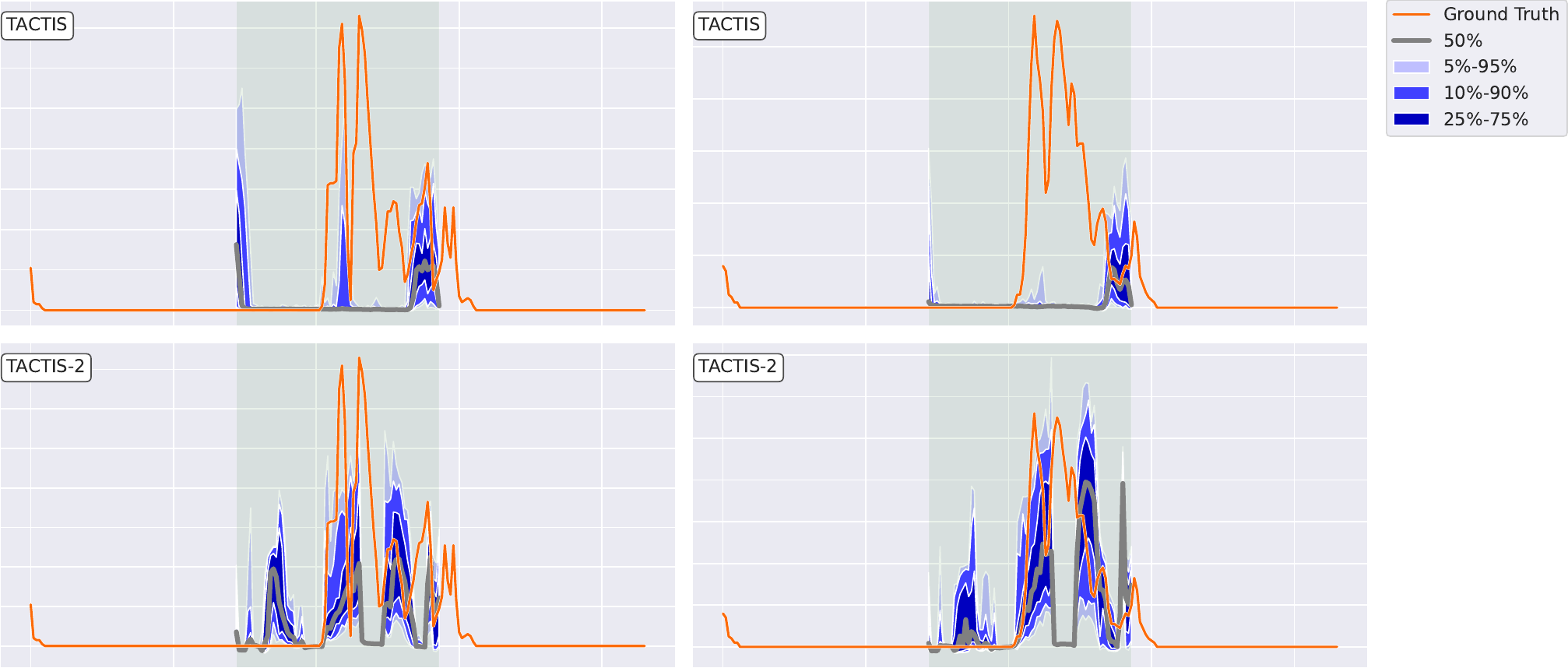} }}%
    \caption{Example interpolations by \tactis{} (top) and \tactisii{} (bottom) on the \texttt{solar-10min} dataset, along with the context provided to the model.}
    \label{fig:fig-solar-interpols-1}%
\end{figure}

\begin{figure}[h!]%
    \centering
    {{\includegraphics[width=0.95\linewidth]{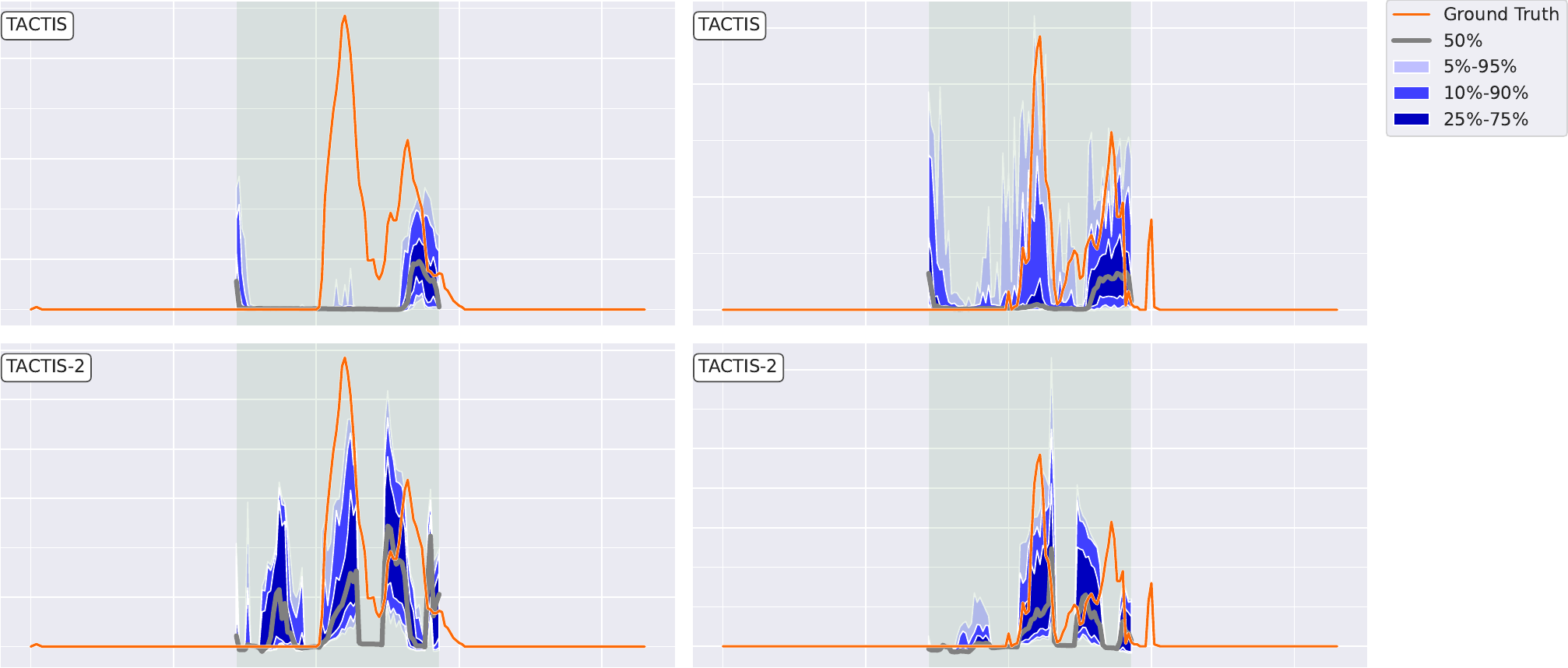} }}%
    \caption{More example interpolations by \tactis{} (top) and \tactisii{} (bottom) on the \texttt{solar-10min} dataset, along with the context provided to the model.}
    \label{fig:fig-solar-interpols-2}%
\end{figure}

\cref{fig:fig-kdd-interpols-1} and \cref{fig:fig-kdd-interpols-2} show a few examples of interpolations for the \texttt{kdd-cup} dataset, comparing \tactis{} and \tactisii{}. \cref{fig:fig-solar-interpols-1} and \cref{fig:fig-solar-interpols-2} show a few examples of interpolations for the \texttt{solar-10min} dataset, comparing \tactis{} and \tactisii{}. All these examples were picked randomly.

\FloatBarrier
\section{Theory and Proofs}\label{app:proof}

\subsection{Proof of Proposition~\ref{thm:degenerate-solutions}}\label{app:sec-degenerate-proof}

\begin{repproposition}{thm:degenerate-solutions}
    (Invalid Solutions) Assuming that all random variables $X_1, \ldots, X_d$ have continuous marginal distributions and assuming infinite expressivity for $\{F_{\phi_i}\}_{i = 1}^d$ and $c_{\phi_c}$, Problem~(\ref{eq:general-opt-problem}) has infinitely many invalid solutions wherein $c_{\phi_c}$ is not the density function of a valid copula.
\end{repproposition}
\begin{proof}
Consider a joint distribution over $n$ continuous random variables $X_1, \ldots, X_n$ with CDF $P(x_1, \ldots, x_n)$. According to Sklar's theorem \citep{sklar1959fonctions}, we know that the joint CDF factorizes as:
$$P(x_1, \ldots, x_n) = C^\star\left(F^\star_1(x_1), \ldots, F^\star_n(x_n)\right),$$ 
where $C^\star$ and $F^\star_i$ are the true underlying copula and marginal distributions of $P$, respectively.
Further, since all $X_i$ are continuous, this factorization is unique.

Now, consider Problem~(\ref{eq:general-opt-problem}).
Without loss of generality, the proof considers the joint CDF instead of the PDF that appears in the main text.
This problem can be solved by finding $\phi_1, \ldots, \phi_d, \phi_c$, such that $P(X_1, \ldots, X_d) = C_{\phi_c}\left(F_{\phi_1}(x_1) \ldots, F_{\phi_n}(x_n)\right)$.
In what follows, we show that, assuming infinite expressivity for $C_{\phi_c}$ and the $F_{\phi_i}$, there are infinitely such solutions where $C_{\phi_c}$ is not a valid copula.

%Comment: Étienne: I would say that the normalizing flows can learn arbitrary "strictly monotonic" mappings instead of "invertible". There are many invertible mappings which are not monotonic and thus cannot be modelled with the flows.
% Let $F_{\phi_i} : \mathbb{R} \rightarrow [0, 1]$ be an arbitrary invertible mapping from the real line to the unit interval.
Let $F_{\phi_i} : \mathbb{R} \rightarrow [0, 1]$ be an arbitrary strictly monotonic increasing function from the real line to the unit interval.
The ground truth CDFs $F^\star_i$ are an example of such functions, but infinitely many alternatives exist, e.g., $F_{\phi_i}(x_i) = \text{sigmoid}(\alpha \cdot x_i)$, where $\alpha \in \reals^{>0}$.
Then, given any such parametrization of $F_{\phi_i}$ and assuming infinite expressivity for $C_{\phi_c}$, take
$$C_{\phi_c}(u_1, \ldots, u_n) = C^\star\left( F^\star_1(F^{-1}_{\phi_1}(u_1)), \ldots, F^\star_n(F^{-1}_{\phi_n}(u_n)) \right),$$
Due to the invertibility of strictly monotonic functions, this yields a valid solution for Problem~(\ref{eq:general-opt-problem}):
$$
C_{\phi_c}\left(F_{\phi_1}(x_1) \ldots, F_{\phi_n}(x_n)\right) = C^\star\left(F^\star_1(x_1), \ldots, F^\star_n(x_n)\right) = P(x_1, \ldots, x_n).
$$

Let us now proceed by contradiction to show that, in general, such a $C_{\phi_c}$ is not a valid copula.
Assume that $C_{\phi_c}$ is a valid copula and, without loss of generality, that $F_{\phi_1} \not= F^\star_1$.
Then, by definition, all univariate marginals of $C_{\phi_c}$ should be standard uniform distributions and thus,
$$
C_{{\phi_c}}\left(1, \ldots, u_i, \ldots, 1\right) = u_i.
$$
Therefore, we have that:
$$
u_1 = C_{{\phi_c}}\left(u_1, 1, \ldots, 1\right) = C^\star\left( F^\star_1(F^{-1}_{\phi_1}(u_1)), 1, \ldots, 1 \right) = F^\star_1(F^{-1}_{\phi_1}(u_1)),
$$
since $C^\star$ is also a valid copula.
%Comment: Étienne: I would rewrite this equation as F_phi(u) = F*(u), then say that (by hypothesis) differ for at least a single value of u.
However, we know that $F_{\phi_1} \not= F^\star_1$ and thus we arrive at a contradiction.

Hence, without further constraints on $F_{\phi_i}$ and $C_{\phi_c}$, such as the approaches presented in \cref{sec:attn-copulas,sec:curriculum-attn-copulas}, one can obtain arbitrarily many solutions to Problem~(\ref{eq:general-opt-problem}), where $C_{\phi_c}$ is not a valid copula.

\end{proof}

\subsection{Proof of Proposition~\ref{thm:copula}}\label{app:sec-valid-copula}
\begin{repproposition}{thm:copula}
    (Validity) Assuming that all random variables $X_1, \ldots, X_d$ have continuous marginal distributions and assuming infinite expressivity for $\{F_{\phi_i}\}_{i = 1}^d$ and $c_{\phi_c}$, solving Problem~(\ref{eq:second}) yields a solution to Problem~(\ref{eq:general-opt-problem}) where $c_{\phi_c}$ is a valid copula.
\end{repproposition}
\begin{proof}

Let $p(x_1, \ldots, x_d)$ be the joint density of $\Xb = [X_1, \ldots, X_d]$. According to \citet{sklar1959fonctions}, we know that the density can be written as:
\begin{equation}\label{eq:app-prop2-proof}
p(x_1, \ldots, x_d) = c_{\phi^\star_\text{c}}\Big( 
        F_{\phi^\star_1}\!\big(x_1\big), \ldots, F_{\phi^\star_{d}}\!\big(x_{d}\big)
    \!\Big)  \times
    f_{\phi^\star_1}\!\big(x_1\big) \times \cdots \times f_{\phi^\star_{d}}\!\big(x_{d}\big),
\end{equation}
where $F_{\phi^\star_i}$ and $f_{\phi^\star_i}$ are the ground truth marginal CDF and PDF of $X_i$, respectively, and $c_{\phi^\star_\text{c}}$ is the PDF of the ground truth copula.
Further, since all $F_{\phi^\star_i}$ are continuous, we know that $c_{\phi^\star_\text{c}}$ is unique.

The proof proceeds in two parts: (i) showing that the marginals learned by solving Problem~(\ref{eq:first}) will match the ground truth ($F_{\phi^\star_i}$, $f_{\phi^\star_i}$), and (ii) showing that, given those true marginals, solving Problem~(\ref{eq:second}) will lead to a copula that matches $c_{\phi^\star_\text{c}}$, which is valid by definition.

\paragraphtight{Part 1 (Marginals): } Let us recall the nature of Problem~(\ref{eq:first}):
$$
\argmin_{\phi_1, \ldots, \phi_{d}} \;- \expect_{\xb \sim \Xb} \log \prod_{i = 1}^{d} f_{\phi_{i}}(x_{i}).
$$
Now, notice that this problem can be rewritten as $d$ independent optimization problems:
$$
\argmin_{\phi_i} \;- \expect_{\xb \sim \Xb} \log f_{\phi_{i}}(x_{i}),
$$
which each consists of minimizing the expected marginal negative log-likelihood for one of the $d$ random variables.
Given that the NLL is a strictly proper scoring rule~\citep{gneiting2007strictly}, we know that it will be minimized if and only if $f_{\phi_{i}} = f_{\phi^\star_{i}}$.
Since we assume infinite expressivity for $f_{\phi_{i}}$, we know that such a solution can be learned and thus, solving Problem~(\ref{eq:first}) will recover the ground truth marginals.

\paragraphtight{Part 2 (Copula):} Let us now recall the nature of Problem~(\ref{eq:second}), which assumes that the true marginal parameters $\phi^\star_1, \ldots, \phi^\star_d$ are known:
$$
\argmin_{\phi_c} \quad - \expect_{\xb \sim \Xb}\; \!\log c_{\phi_c}\left(F_{\phi_{1}^\star}(x_{1}), \ldots, F_{\phi_{d}^\star}(x_{d})\right).
$$
Let $\Ub = [U_1, \ldots, U_d]$ be the random vector obtained by applying the probability integral transform to the $X_i$, i.e., $U_i = F_{\phi^\star_i}(X_i)$.
By construction, the marginal distribution of each $U_i$ will be uniform.
Problem~(\ref{eq:second}) can be rewritten as:
$$
\argmin_{\phi_c} \quad - \expect_{\ub \sim \Ub}\; \!\log c_{\phi_c}\left(u_1, \ldots, u_d\right),
$$
which corresponds to minimizing the expected joint negative log-likelihood. Again, since the NLL is a strictly proper scoring rule, it will be minimized if and only if $c_{\phi_c} = c_{\phi^\star_c}$.
Since we assume infinite expressivity for $c_{\phi_{c}}$, we know that such a solution can be learned and thus, solving Problem~(\ref{eq:second}) will recover the ground truth copula, which is valid by definition.

Finally, since solving Problem~(\ref{eq:first}) yields the true marginals, and solving Problem~(\ref{eq:second}) based on the solution of Problem~(\ref{eq:first}) recovers the true copula, we have that the combination of both solutions yields $p(x_1, \ldots, x_d)$ (see \cref{eq:app-prop2-proof}) and thus they constitute a valid solution to Problem~(\ref{eq:general-opt-problem}).
% we know that their combination allows to recover $p(x_1, \ldots, x_d)$ exactly (see \cref{eq:app-prop2-proof}). Hence, the parameters obtained using this approach are valid solutions of Problem~(\ref{eq:general-opt-problem}). 

\end{proof}

\subsection{Example of Valid and Invalid Decompositions} \label{app:copula_examples}

To illustrate \cref{thm:copula,thm:degenerate-solutions}, we use the \tactis{} decoder\footnote{We slightly modified the attentional copula architecture by removing the forced $\uniform{0}{1}$ distribution from the first variable being sampled. This was to allow the attentional copula to be able to learn any distribution on the unit cube.} to fit a two-dimensional hand-crafted distribution.
The target distribution is built from a copula which is an equal mixture of two Clayton copulas, with parameters $\theta=9.75$ and $\theta=-0.99$; and from marginals which are a Gamma distribution with parameter $\alpha=1.99$ and a Double Weibull distribution with parameter $c=3$.

% scale=0.7 is approximately equal to width=0.45\linewidth,
% but doesn't suffer from variable figure width
\begin{figure}[h!]%
    \centering
    \includegraphics[scale=0.7]{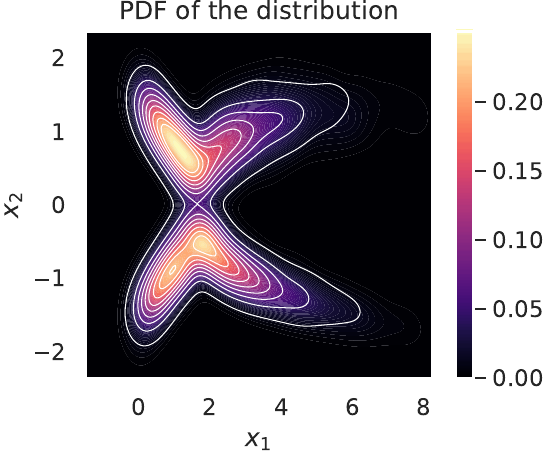}%
    \qquad
    \includegraphics[scale=0.7]{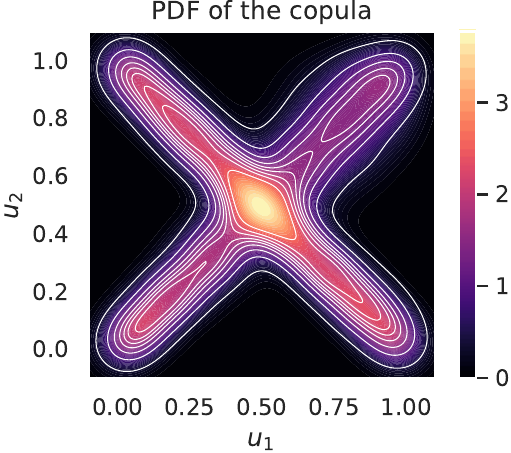} \\
    \vspace{3mm}
    
    \includegraphics[scale=0.7]{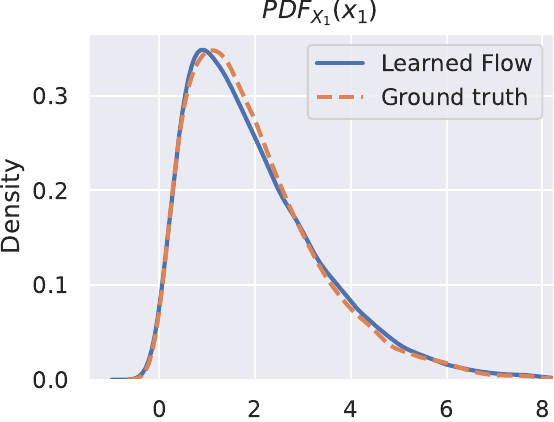}%
    \qquad
    \includegraphics[scale=0.7]{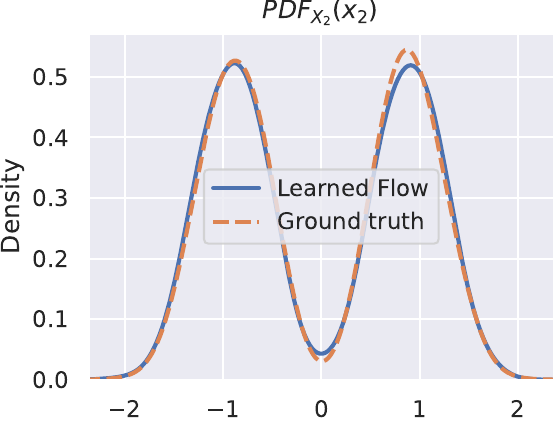}%
    \caption{Example of a two-dimensional distribution being accurately split into its constituent copula and marginals using an attentional copula and flows.
    (top left) Probability Densities of the target distribution (colors) and of the reconstructed distribution (white contours). (top right) Probability Density of the target distribution copula (colors) and of the attentional copula (white contours).
    (bottom) Probability Densities of each variable marginal distributions (dashed lines) and of the flows (solid lines).
    }%
    \label{fig:curriculum_copula_example}%
\end{figure}

As was previously shown in \cref{fig:curriculum_bare_copula}, \cref{fig:curriculum_copula_example} shows that the attentional copula and the flows can accurately reproduce the target distribution copula and marginals, when trained using the procedure outlined in \cref{sec:curriculum-attn-copulas}.
This is the expected result according to \cref{thm:copula}.

% scale=0.7 is approximately equal to width=0.45\linewidth,
% but doesn't suffer from variable figure width
\begin{figure}[h!]%
    \centering
    \includegraphics[scale=0.7]{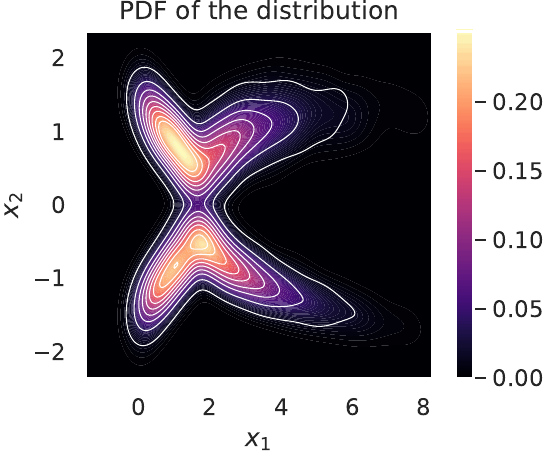}%
    \qquad
    \includegraphics[scale=0.7]{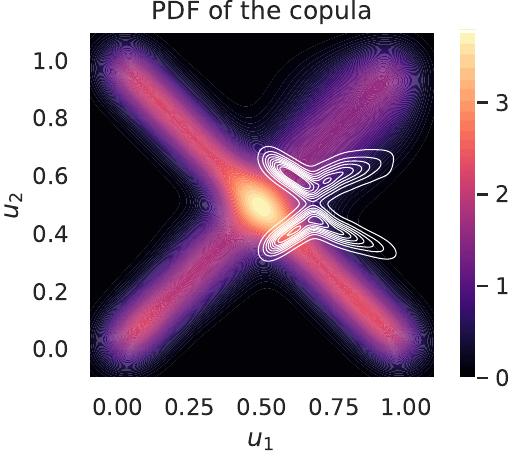} \\
    \vspace{3mm}
    
    \includegraphics[scale=0.7]{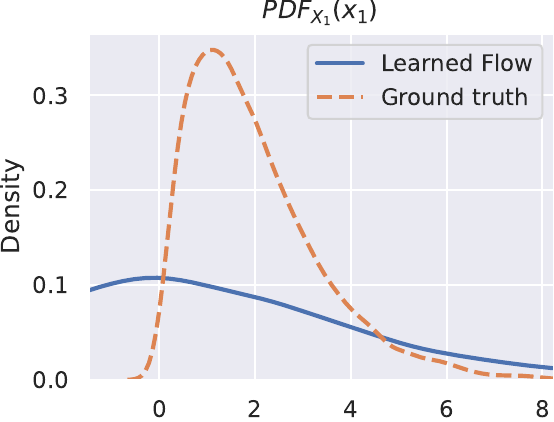}%
    \qquad
    \includegraphics[scale=0.7]{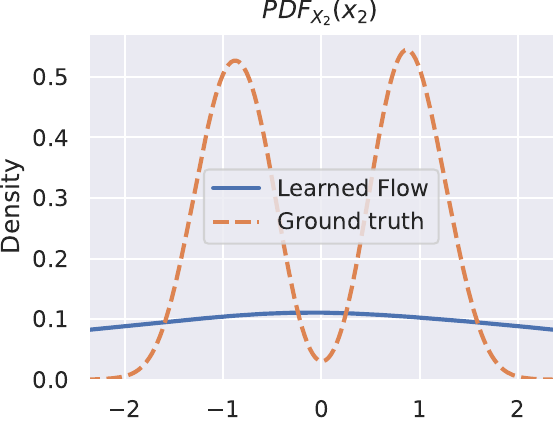}%
    \caption{Example of a two-dimensional distribution being inaccurately split into its constituent copula and marginals using an attentional copula and flows.
    (top left) Probability Densities of the target distribution (colors) and of the reconstructed distribution (white contours). (top right) Probability Density of the target distribution copula (colors) and of the attentional copula (white contours).
    (bottom) Probability Densities of each variable marginal distributions (dashed lines) and of the flows (solid lines).
    }%
    \label{fig:wrong_copula_example}%
\end{figure}

\cref{fig:wrong_copula_example} shows the opposite situation: an attentional copula and flows that together accurately reproduce the target distribution, but individually are very far from the target copula and marginals.
This is an example of an invalid copula as described in \cref{thm:degenerate-solutions}.

\FloatBarrier
\section{Experimental Details}\label{app:experimental-details}

% % Experimental details: Datasets, hyperparameters (optuna), baselines numbers are from TACTIS, GPUs used (resources), stopping criterion, prebacktesting

\subsection{Datasets}\label{app:datasets}

\begin{table}[h!]
\centering
\caption{Datasets used in the paper. Table reproduced from \citet{tactis} with permission.}
\footnotesize\renewcommand{\arraystretch}{1.25}%
\resizebox{\textwidth}{!}{%
    \begin{tabular}{l l c c c}
     \toprule
     Short name & Monash name & Frequency & Number of series & Prediction length \\
     \midrule
     {\texttt{electricity}} & Electricity Hourly Dataset & 1 hour & 321 & 24 \\ 
     {\texttt{fred-md}} & FRED-MD Dataset & 1 month & 107 & 12 \\
     {\texttt{kdd-cup}} & KDD Cup Dataset (without Missing Values) & 1 hour & 270 & 48 \\
     {\texttt{solar-10min}} & Solar Dataset (10 Minutes Observations) & 10 minutes & 137 & 72 \\ 
     {\texttt{traffic}} & Traffic Hourly Dataset & 1 hour & 862 &  24\\ 
     \bottomrule
    \end{tabular}%
}
\label{table:datasets-def}
\end{table}

We reuse the same datasets used by \citep{tactis} in their forecasting benchmark. \cref{table:datasets-def} describes the datasets used in the paper for reference.

\subsection{Training Procedure} \label{app:training_procedure}

\paragraph{Compute used}
All models in the paper are trained in a Docker container with access to a Nvidia Tesla-P100 GPU (12 GB of memory), 2 CPU cores, and 32 GB of RAM.
Due to sampling and NLL evaluation requiring more memory due to being done without bagging, they were done with computing resources with more available memory as needed.

\paragraph{Batch size}
The batch size was selected as the largest power of 2 between 1 and 256 such that the GPU memory requirement from the training loop did not go above the available memory.

\paragraph{Training loop}
Each epoch of the training loop consists of training the model using 512 batches from the training set, followed by computing the NLL on the validation set.
We stop the training when any of these conditions are reached:
\begin{itemize}
    \item We have reached 72 hours of training for a model without the two-stage curriculum,
    \item We have reached 36 hours of training for a single stage of the curriculum,
    \item Or we did not observe any improvement in the best value of the NLL on the validation set for 50 epochs.
\end{itemize}
We then return the version of the model that reached the best value of the NLL on the validation set.

% Importantly, to perform early stopping when training on each backtesting split, we use the last $7$ windows in each training for validation. Early stopping is crucial for \tactisii{}, since it relies on early stopping in both the stages of its two-stage curriculum.

For SPD, following \citet{bilos_private}, in the absence of NLL, we use the loss function of the model i.e. the squared difference between the predicted and true noise.

\paragraph{Validation set}
During hyperparameter search, we reserve from the end of the training set a number of timesteps equal to 7 times the prediction length.
The validation set is then built from all prediction windows that fit in this reserved data.
During backtesting, we also remove this amount of data from the training set (except for \texttt{fred-md} where only remove a number equal to the prediction length), but the validation set is built from the 7 (or single for \texttt{fred-md}) non-overlapping prediction window we can get from this reserved data.

\paragraph{Interpolation}
For the interpolation experiments, we always use a centered interpolation window: we keep an equal amount of observed timesteps immediately before and immediately after the prediction window.
Since we do not have access to the data that happens after the last backtesting prediction window, we instead shift all backtesting prediction windows by the length of the posterior observed data length.
All parts of the training set which overlap these backtesting prediction windows are also removed, to prevent any information leakage during training.

\subsection{Hyperparameter Search Protocol} \label{app:hyperparam_protocol}

The metric values for Auto-ARIMA, ETS, GPVar, TempFlow, and TimeGrad are taken from \citet{tactis}.
We refer the reader to \citet{tactis} for details on how the protocol used for hyperparameter search for these models, but one important detail is that it used CRPS-Sum as the target to minimize.
In theory, this should bias the comparison based on CRPS-Sum (\cref{table:results-forecasting-CRPS-Sum}) in their favor.

For \tactis{}, and \tactisii{}, we use Optuna \citep{akiba2019optuna} to minimize the NLL on the validation set.
Note that we use the same validation set for the NLL evaluation and the early stopping during hyperparameter search.
For each dataset and model, we let Optuna run for 6 days, using 50 training runs in parallel.

For SPD, following \citet{bilos_private}, in the absence of NLL, we used the same protocol as for \tactis{} and \tactisii{} but we minimized the loss function of the model, the squared difference between the predicted and true noise.

For the interpolation experiments, we do not perform a separate hyperparameter search and instead reuse the hyperparameters found for forecasting.

\subsection{Selected Hyperparameters} \label{app:hyperparam_values}

%The hyperparameters of Auto-ARIMA, ETS, GPVar, TempFlow, TimeGrad and \tactis{} are reused from \citet{tactis}. We conduct hyperparameter search for SPD \citep{pmlr-v202-bilos23a} and \tactisii{}. We list all the values considered for each hyperparameter of the two methods.

In this section, we present the possible values for the hyperparameters during the hyperparameter search for \tactis{}, \tactisii{}, and SPD, together with their optimal values as found by Optuna.
\cref{table:hyper-tactis1,table:hyper-tactis2,table:hyper-spd} present the hyperparameters which are restricted to finite sets of values for \tactis{}, \tactisii{}, and SPD, respectively.
\cref{table:hyper-lr} shows the optimal values for the learning rate parameters, which were allowed to be in the $[10^{-6}, 10^{-2}]$ continuous range.

% \cref{table:hyper-tactis1} contains the hyperparameters considered and their possible values for the \tactis{} model. 

\begin{table}[h!]
\centering
\caption{Possible hyperparameters for \tactis{}. \besthp{e}, \besthp{f}, \besthp{k}, \besthp{s}, and \besthp{t} respectively indicate the optimal hyperparameters for \texttt{electricity}, \texttt{fred-md}, \texttt{kdd-cup},  \texttt{solar-10min}, and \texttt{traffic}.}
\footnotesize\renewcommand{\arraystretch}{1.25}%
\begin{tabular}{@{}r l l@{}}
 \toprule
 & Hyperparameter & Possible values \\
 \midrule
 Model    & \makecell[tl]{Encoder transformer embedding size (per head) \\ and feed-forward network size} & $4$, $8$, $16$, $32$\besthp{et}, $64$\besthp{fs}, $128$, $256$\besthp{k}, $512$ \\
          & Encoder transformer number of heads & $1$\besthp{k}, $2$\besthp{f}, $3$\besthp{s}, $4$, $5$, $6$\besthp{t}, $7$\besthp{e} \\
          & Encoder number of transformer layers pairs & $1$\besthp{k}, $2$, $3$\besthp{s}, $4$, $5$\besthp{eft}, $6$, $7$ \\
          & Encoder input embedding dimensions & $1$\besthp{t}, $2$\besthp{e}, $3$, $4$\besthp{s}, $5$, $6$, $7$\besthp{fk} \\
          & Encoder time series embedding dimensions & $5$\besthp{t}, $8$, $16$\besthp{e}, $32$\besthp{k}, $48$\besthp{s}, $64$, $128$, $256$\besthp{f}, $512$ \\
          & Decoder DSF number of layers & $1$\besthp{t}, $2$, $3$\besthp{f}, $4$, $5$\besthp{es}, $6$, $7$\besthp{k} \\
          & Decoder DSF hidden dimensions & $4$, $8$, $16$\besthp{fs}, $32$, $48$\besthp{e}, $64$, $128$, $256$, $512$\besthp{kt} \\          
          & Decoder MLP number of layers & $1$\besthp{e}, $2$\besthp{ft}, $3$, $4$, $5$, $6$, $7$\besthp{ks} \\
          & Decoder MLP hidden dimensions & $4$, $8$\besthp{k}, $16$, $32$\besthp{e}, $48$, $64$\besthp{f}, $128$\besthp{t}, $256$, $512$\besthp{s} \\
          & Decoder transformer number of layers & $1$\besthp{k}, $2$\besthp{f}, $3$\besthp{e}, $4$, $5$, $6$\besthp{t}, $7$\besthp{e} \\
          & Decoder transformer embedding size (per head) & $4$\besthp{s}, $8$, $16$, $32$, $48$\besthp{f}, $64$\besthp{ek}, $128$\besthp{t}, $256$, $512$ \\
          & Decoder number transformer heads & $1$\besthp{e}, $2$, $3$, $4$\besthp{ks}, $5$, $6$\besthp{ft}, $7$ \\
          & Decoder number of bins in conditional distribution & $10$, $20$\besthp{ft}, $50$\besthp{s}, $100$\besthp{e}, $200$\besthp{k}, $500$ \\
 Data     & Normalization & Standardization\besthp{efkst} \\
          & History length to prediction length ratio & $1$, $2$\besthp{ks}, $3$\besthp{eft} \\
 Training & Optimizer & Adam\besthp{efkst} \\
          & Weight decay & $0$\besthp{efks}, $(10^{-5})$, $(10^{-4})$, $(10^{-3})$\besthp{t} \\
          & Gradient clipping & $(10^3)$\besthp{efkst}, $(10^4)$ \\
 \bottomrule
\end{tabular}
\label{table:hyper-tactis1}
\end{table}

% \cref{table:hyper-tactis2} contains the hyperparameters considered and their possible values for the \tactisii{} model. 

\begin{table}[h!]
\centering
\caption{Possible hyperparameters for \tactisii{}. \besthp{e}, \besthp{f}, \besthp{k}, \besthp{s}, and \besthp{t} respectively indicate the optimal hyperparameters for \texttt{electricity}, \texttt{fred-md}, \texttt{kdd-cup},  \texttt{solar-10min}, and \texttt{traffic}.}
\footnotesize\renewcommand{\arraystretch}{1.25}%
\begin{tabular}{@{}rp{7.4cm}l@{}}
 \toprule
 & Hyperparameter & Possible values \\
 \midrule
 Model    & Marginal CDF Encoder transformer embedding size (per head) and feed-forward network size & $4$, $8$\besthp{st}, $16$, $32$\besthp{k}, $64$, $128$\besthp{e}, $256$, $512$\besthp{f} \\
          & Marginal CDF Encoder transformer number of heads & $1$, $2$, $3$\besthp{e}, $4$\besthp{ft}, $5$\besthp{s}, $6$\besthp{k}, $7$ \\
          & Marginal CDF Encoder number of transformer layers pairs & $1$\besthp{f}, $2$\besthp{s}, $3$\besthp{e}, $4$\besthp{kt}, $5$, $6$, $7$ \\
          & Marginal CDF Encoder input encoder layers & $1$, $2$, $3$, $4$\besthp{fk}, $5$, $6$\besthp{s}, $7$\besthp{et} \\
          & Marginal CDF Encoder time series embedding dimensions & $5$\besthp{fks}, $8$\besthp{e}, $16$\besthp{t}, $32$, $48$, $64$, $128$, $256$, $512$ \\
          & Attentional Copula Encoder transformer embedding size (per head) and feed-forward network size & $4$\besthp{k}, $8$\besthp{fst}, $16$\besthp{e}, $32$, $64$, $128$, $256$, $512$ \\
          & Attentional Copula Encoder transformer number of heads & $1$, $2$\besthp{e}, $3$\besthp{f}, $4$\besthp{t}, $5$\besthp{s}, $6$\besthp{k}, $7$ \\
          & Attentional Copula Encoder number of transformer layers pairs & $1$, $2$\besthp{s}, $3$\besthp{k}, $4$\besthp{t}, $5$\besthp{ef}, $6$, $7$ \\
          & Attentional Copula Encoder input encoder layers & $1$\besthp{s}, $2$\besthp{k}, $3$, $4$, $5$\besthp{t}, $6$, $7$\besthp{ef} \\
          & Attentional Copula Encoder time series embedding dimensions & $5$, $8$\besthp{fk}, $16$, $32$, $48$\besthp{s}, $64$\besthp{t}, $128$, $256$\besthp{e}, $512$ \\
          & Decoder DSF number of layers & $1$, $2$\besthp{s}, $3$, $4$\besthp{kt}, $5$, $6$\besthp{f}, $7$\besthp{e} \\
          & Decoder DSF hidden dimensions & $4$, $8$, $16$\besthp{t}, $32$, $48$\besthp{ef}, $64$, $128$\besthp{k}, $256$\besthp{s}, $512$ \\
          & Decoder MLP number of layers & $1$, $2$\besthp{ef}, $3$\besthp{ks}, $4$, $5$, $6$\besthp{t}, $7$ \\
          & Decoder MLP hidden dimensions & $4$, $8$, $16$\besthp{kst}, $32$, $48$\besthp{f}, $64$, $128$\besthp{e}, $256$, $512$ \\          
          & Decoder transformer number of layers & $1$, $2$\besthp{t}, $3$\besthp{s}, $4$\besthp{k}, $5$, $6$, $7$\besthp{ef} \\
          & Decoder transformer embedding size (per head) & $4$, $8$, $16$\besthp{s}, $32$\besthp{e}, $48$\besthp{kt}, $64$\besthp{f}, $128$, $256$, $512$ \\
          & Decoder transformer number of heads & $1$\besthp{k}, $2$, $3$, $4$\besthp{tf}, $5$, $6$\besthp{s}, $7$\besthp{e} \\
          & Decoder number of bins in conditional distribution & $10$, $20$\besthp{e}, $50$\besthp{fks}, $100$\besthp{t}, $200$, $500$ \\
 Data     & Normalization & Standardization\besthp{efkst} \\
          & History length to prediction length ratio & $1$\besthp{t}, $2$\besthp{es}, $3$\besthp{fk} \\
 Training Phase 1 & Optimizer & Adam\besthp{efkst} \\
            & Weight decay & $0$\besthp{est}, $(10^{-5})$\besthp{k}, $(10^{-4})$, $(10^{-3})$\besthp{f} \\
          & Gradient clipping & $(10^3)$\besthp{efst}, $(10^4)$\besthp{k} \\
 Training Phase 2 & Optimizer & Adam\besthp{efkst} \\
            & Weight decay & $0$\besthp{skt}, $(10^{-5})$, $(10^{-4})$\besthp{ef}, $(10^{-3})$ \\
          & Gradient clipping & $(10^3)$\besthp{es}, $(10^4)$\besthp{fkt} \\
 \bottomrule
\end{tabular}
\label{table:hyper-tactis2}
\end{table}

% \cref{table:hyper-spd} shows the hyperparameters considered and their possible values for the SPD model \citep{pmlr-v202-bilos23a}. The choice of possible hyperparameters was discussed with the authors \citep{bilos_private}.

\begin{table}[h!]
\centering
\caption{Possible hyperparameters for SPD. \besthp{e}, \besthp{f}, \besthp{k}, \besthp{s}, and \besthp{t} respectively indicate the optimal hyperparameters for \texttt{electricity}, \texttt{fred-md}, \texttt{kdd-cup},  \texttt{solar-10min}, and \texttt{traffic}. The choice of possible hyperparameters was discussed with the authors \citep{bilos_private}.}
\footnotesize\renewcommand{\arraystretch}{1.25}%
\begin{tabular}{@{}r l l@{}}
 \toprule
 & Hyperparameter & Possible values \\
 \midrule
 Model    & \texttt{type} & \texttt{\textquotesingle discrete\textquotesingle}\besthp{efkst}, \texttt{\textquotesingle continuous\textquotesingle} \\
          & \texttt{noise} & \texttt{\textquotesingle normal\textquotesingle}\besthp{efks}, \texttt{\textquotesingle ou\textquotesingle}, \texttt{\textquotesingle gp\textquotesingle}\besthp{t} \\
          & \texttt{num\_layers} & $1$, $2$\besthp{efs}, $3$\besthp{kt} \\
          & \texttt{num\_cells} & $20$\besthp{fkt}, $40$\besthp{s}, $60$\besthp{e} \\
          & \texttt{dropout\_rate} & $0$\besthp{k}, $0.001$\besthp{tf}, $0.01$\besthp{es}, $0.1$ \\
          & \texttt{diff\_steps} & $25$, $50$, $100$\besthp{efkst} \\
          & \texttt{beta\_schedule} & \texttt{\textquotesingle linear\textquotesingle}, \texttt{\textquotesingle quad\textquotesingle}\besthp{efkst} \\
          & \texttt{residual\_layers} & $4$, $8$\besthp{e}, $16$\besthp{fkst} \\
          & \texttt{residual\_channels} & $4$, $8$, $16$\besthp{efkst} \\
          & \texttt{scaling} & \texttt{False}\besthp{ks}, \texttt{True}\besthp{eft} \\
 Data     & History length to prediction length ratio & $1$\besthp{efkst} \\
 Training & Optimizer & Adam\besthp{efkst} \\
          & Weight decay & $0$\besthp{s}, $(10^{-5})$\besthp{ek}, $(10^{-4})$\besthp{ft}, $(10^{-3})$ \\
          & Gradient clipping & $(10^1)$, $(10^2)$, $(10^3)$\besthp{k}, $(10^4)$\besthp{efst} \\
 \bottomrule
\end{tabular}
\label{table:hyper-spd}
\end{table}

% Allows Table 10 to show up at the top of Appendix D, while keeping Tables 8 and 9 before Appendix D.
% If removed, all 3 of these tables would be added after the appendix.
\FloatBarrier

\begin{table}[h!]
    \centering
    \caption{Optimal learning rate as obtained by our hyperparameter search procedure.} %
    \setlength{\tabcolsep}{4.5pt}
    \label{table:hyper-lr}
    \footnotesize\renewcommand{\arraystretch}{1.25}%
    \smallskip

  \hspace*{-0.5ex}%
  \begin{tabular}{@{}rlccccc@{}}
   \toprule
    \multicolumn{1}{@{}c}{\textbf{Model}} & & \multicolumn{1}{c}{\texttt{electricity}} & \multicolumn{1}{c}{\texttt{fred-md}} & \multicolumn{1}{c}{\texttt{kdd-cup}} & \multicolumn{1}{c}{\texttt{solar-10min}} & \multicolumn{1}{c}{\texttt{traffic}} \\
   \midrule
  \tactis{}   & & $7.4 \times 10^{-5}$ & $2.7 \times 10^{-4}$  & $9.2 \times 10^{-5}$  & $8.0 \times 10^{-5}$  & $2.8 \times 10^{-3}$  \\
  \tactisii{} & Phase 1 & $5.0 \times 10^{-5}$ & $1.7 \times 10^{-3}$  & $2.3 \times 10^{-5}$  & $1.8 \times 10^{-3}$  & $2.5 \times 10^{-3}$  \\
  \tactisii{} & Phase 2 & $2.2 \times 10^{-4}$ & $1.9 \times 10^{-3}$  & $9.4 \times 10^{-4}$  & $7.0 \times 10^{-4}$  & $6.7 \times 10^{-4}$  \\
  SPD         & & $8.9 \times 10^{-4}$ & $5.6 \times 10^{-3}$  & $2.7 \times 10^{-3}$  & $3.1 \times 10^{-3}$  & $4.4 \times 10^{-3}$  \\
%  \tactis{}   & & $0.000074$ & $0.00027$ & $0.000092$ & $0.000080$ & $0.00275$ \\
%  \tactisii{} & Phase 1 & $0.0000495$ & $0.001651$ & $0.0000227$ & $0.001799$ & $0.002468$ \\
%  \tactisii{} & Phase 2 & $0.000225$ & $0.00186$ & $0.000943$ & $0.000695$ & $0.00067$ \\
%  SPD         & & $0.00089$ & $0.0056$ & $0.00272$ & $0.00307$ & $0.0044$ \\
   \bottomrule
  \end{tabular}
\end{table}

% \FloatBarrier
% \newpage
% \color{blue}

\section{Additional Results}

\subsection{Demonstration of Model Flexibility in Real-World Datasets} \label{app:real-world-flexibility}

We present empirical results on real-world datasets verifying the ability of \tactisii{} to work with uneven and unaligned data. The datasets used in these experiments are derived from real-world datasets with aligned and evenly sampled observations. To create uneven sampling, for each series we independently follow a process where if we keep the observation at timestep $t$, then the next kept observation will be at timestep $t + \delta$, with $\delta$ randomly chosen from 1, 2, or 3. To create unalignment, we randomly choose the sampling frequency for each series from: i) the original frequency, ii) half the original frequency, and iii) a quarter of the original frequency in the dataset. The datasets resulting from such corruptions are faithful to real-world scenarios. We follow the same protocol as described in \cref{sec:results}, except that we use a single backtest timestamp for evaluation. \cref{table:results-uneven-NLL} and \cref{table:results-unaligned-NLL} compare the ability of \tactis{} and \tactisii{} to perform forecasting with uneven and unaligned data respectively on the two datasets with the largest forecast horizon. It can be seen that the ability of \tactisii{} to work with uneven and unaligned data is much more pronounced than that of \tactis{}. 

\begin{table}
    \centering
    % \captionsetup{labelfont={bf,color=blue}}
    \caption{Mean NLL values for the forecasting experiments on uneven data. Results are over 5 seeds. Lower is better and the best results are in bold.} %
    \setlength{\tabcolsep}{4.5pt}
    \label{table:results-uneven-NLL}
    \footnotesize\renewcommand{\arraystretch}{1.15}%
    \smallskip

  \setlength{\tabcolsep}{0.45ex}
  \hspace*{-0.5ex}%
  {\begin{tabular}{@{}rrrr@{}}
   \toprule
    \multicolumn{1}{@{}c}{\textbf{Model}} & \multicolumn{1}{c}{\texttt{kdd-cup}} & \multicolumn{1}{c}{\texttt{solar-10min}} \\
   \midrule
    \tactis{} & 2.858 $\pm$ 0.476  & 1.001 $\pm$ 0.783  \\
    \tactisii{} & \textbf{1.866 $\pm$ 0.230} & \textbf{0.261 $\pm$ 0.011}   \\
   %  \tactisii{} & \textbf{$-$0.189 $\pm$ 0.034} & \textbf{$-$0.250 $\pm$ 0.017} & \textbf{0.138 $\pm$ 0.043} & \textbf{$-$3.262 $\pm$ 0.186} & \textbf{$-$0.431 $\pm$ 0.063} \\
   %  % \tactisii{} & \textbf{10.566 $\pm$ 0.0034} $*$ & \textbf{0.3984 $\pm$ 0.0438} & \textbf{1.2485 $\pm$ 0.4445} & \textbf{-5.1533 $\pm$ 0.137} & -\\
   \bottomrule
  \end{tabular}}
\end{table}

\begin{table}
    \centering
    % \captionsetup{labelfont={bf,color=blue}}
    \caption{Mean NLL values for the forecasting experiments on unaligned data. Results are over 5 seeds. Lower is better and the best results are in bold.} %
    \setlength{\tabcolsep}{4.5pt}
    \label{table:results-unaligned-NLL}
    \footnotesize\renewcommand{\arraystretch}{1.15}%
    \smallskip

  \setlength{\tabcolsep}{0.45ex}
  \hspace*{-0.5ex}%
 {\begin{tabular}{@{}rrrr@{}}
   \toprule
    \multicolumn{1}{@{}c}{\textbf{Model}} & \multicolumn{1}{c}{\texttt{kdd-cup}} & \multicolumn{1}{c}{\texttt{solar-10min}} \\
   \midrule
    \tactis{} & 1.527 $\pm$ 0.072 & $-$0.038 $\pm$ 0.220  \\
    \tactisii{} & \textbf{0.722 $\pm$ 0.163} & \textbf{$-$3.218 $\pm$ 0.249}  \\
   %  \tactisii{} & \textbf{$-$0.189 $\pm$ 0.034} & \textbf{$-$0.250 $\pm$ 0.017} & \textbf{0.138 $\pm$ 0.043} & \textbf{$-$3.262 $\pm$ 0.186} & \textbf{$-$0.431 $\pm$ 0.063} \\
   %  % \tactisii{} & \textbf{10.566 $\pm$ 0.0034} $*$ & \textbf{0.3984 $\pm$ 0.0438} & \textbf{1.2485 $\pm$ 0.4445} & \textbf{-5.1533 $\pm$ 0.137} & -\\
   \bottomrule
  \end{tabular}}
\end{table}

\end{document}